\documentclass[preprint,12pt]{elsarticle}
\usepackage{nicefrac,amsmath,amssymb,inputenc,graphicx,xcolor,rotating,amsthm}
\usepackage{blindtext}
\usepackage{enumerate}
\usepackage{xcolor}
\usepackage{tcolorbox}
\tcbset{center, top=5pt, bottom=5pt, width = \textwidth,colback=pink!20!,colframe=pink!80!,coltitle=black}

\newtheorem{theorem}{Theorem}[section]
\newtheorem{corollary}[theorem]{Corollary}

\newtheorem{proposition}[theorem]{Proposition}

\newtheorem{problem}[theorem]{Problem}

\newcommand{\relu}{\text{ReLU}}
\newcommand{\I}{I_{\alpha}}
\newcommand{\Il}{I_{\lambda}}
\newcommand{\xphi}{\langle x,\phi_i\rangle}
\newcommand{\yphi}{\langle y,\phi_i\rangle}
\newcommand{\Ca}{C_{\alpha}}
\newcommand{\Sl}{S_{\lambda}}
\newcommand{\RR}{\mathbb{R}}
\newcommand{\NN}{\mathbb{N}}

\newcommand{\M}{\mathcal{M}}


\begin{document}

\begin{frontmatter}



\title{Optimal lower Lipschitz bounds for ReLU layers, saturation, and phase retrieval}

\author[slu]{Daniel Freeman}
\author[ari]{Daniel Haider}

\affiliation[slu]{organization={Department of Mathematics and Statistics, St. Louis University},
            addressline={Ritter Hall 319}, 
            city={St. Louis},
            postcode={63103}, 
            state={MO},
            country={USA}}
\affiliation[ari]{organization={Acoustics Research Institute, Austrian Academy of Sciences},
            addressline={Dominikanerbastei 16}, 
            city={Vienna},
            postcode={1010}, 
            country={AUT}}

\begin{abstract}
    The injectivity of ReLU layers in neural networks, the recovery of vectors from clipped or saturated measurements, and (real) phase retrieval in $\RR^n$ allow for a similar problem formulation and characterization using frame theory.  In this paper, we revisit all three problems with a unified perspective and derive  lower Lipschitz bounds for ReLU layers and clipping which are analogous to the previously known result for phase retrieval and are optimal up to a constant factor.  
\end{abstract}



\begin{keyword}
Lower Lipschitz stability \sep ReLU \sep saturation \sep clipping \sep phase retrieval 



\end{keyword}

\end{frontmatter}

\section{Introduction}
Many important functions in applications arise as a linear operator composed with  some simple non-linear function.
In engineering, the non-linear component often comes from unwanted technical constraints, such as a limited dynamic range in measurement devices that causes clipping or saturation effects.  For example, if the amplitude of an audio signal  exceeds the threshold of the recording equipment at any point then the device will often record the resulting waveform as having its top and bottom clipped at the threshold.  
In contrast to this, there are circumstances where non-linearities are intentionally incorporated into model designs to enhance their expressiveness. This is extensively made use of in neural networks, where linear maps are alternately concatenated with non-linear activation functions to build large and powerful layer-structured models. 
In both situations, it is fundamentally important to know  whether or not it is possible to recover the original input from the resulting non-linear measurements.  In the unintentional case, we want to know how to effectively design  measurement devices which are able to compensate for the resulting loss of information. In the intended case, we may want to balance the effect of the non-linearity and the information flow, i.e., get the desired (compression or sparsification) effect \textit{and} invertibility which has been leveraged to extend the usage of a model and enhance its interpretability~\cite{ardizzone2019inn, mahendran2015invert}.
In other words, we want to know under which conditions the associated \textit{non-linear measurement operator} is one-to-one on a certain domain of interest.

The maps that we consider here are composed of a linear operator from $\RR^n$ to $\RR^m$, followed by a non-linear function that is applied component-wise to the resulting coefficients. Combining engineering and machine learning terminology we will refer to the respective computational steps as \textit{measurement} and \textit{activation}. Let $(\phi_i)_{i=1}^m$ be a collection of measurement vectors in $\RR^n$ with $m\geq n$.  We say that $(\phi_i)_{i=1}^m$ is a frame for $\RR^n$ if the (linear) measurement map $x\mapsto\left(\langle x,\phi_i\rangle\right)_{i=1}^m$ is one-to-one from $\RR^n$ to $\RR^m$. Let further $\rho:\RR \rightarrow \RR$ be a non-linear activation function, and $X\subseteq \RR^n$ an input domain of choice. The resulting non-linear measurement operator is of the form
\begin{align}\label{eq:measure}
\begin{split}
    \Phi:X&\rightarrow \mathbb{R}^m\\
    x&\mapsto\left(\rho(\langle x,\phi_i\rangle)\right)_{i=1}^m.
\end{split}
\end{align}
Of course, studying the one-to-one property of $\Phi$ is only interesting if the non-linear function $\rho$ is not one-to-one itself. 
We focus on three  examples of activation functions where $\rho$ is not one-to-one, but for which it is possible for the non-linear measurement operator $\Phi$ to be one-to-one on $X$.\\

\noindent
\textbf{A. ReLU layers}.
Originally introduced to regularize the gradients in deep network architectures during training, these maps have established themselves as a powerful and easy-to-use ingredient for the design of neural networks~\cite{glorot2011relu,nair2010reluRBM}. A ReLU layer can be written as a non-linear measurement operator according to \eqref{eq:measure} when setting the activation function $\rho$ to be
\begin{align}\label{eq:reludef}
    \relu(t) = \max(0,t).
\end{align}
Usually, a bias value $\alpha_i\in \RR$ is assigned to every measurement vector $\phi_i$. The map then becomes $x\mapsto (\relu(\xphi-\alpha_i))_{i=1}^m$. The vector $\alpha\in \RR^m$ comprising the bias values is called the \textit{bias vector}, which acts as a threshold mechanism for the activation.
As input domain, we consider $X=\RR^n$ \cite{bruna2014pooling,haider2024relu,puthawala2022relu}.\\

\begin{figure}
    \centering
    \includegraphics[width=\linewidth]{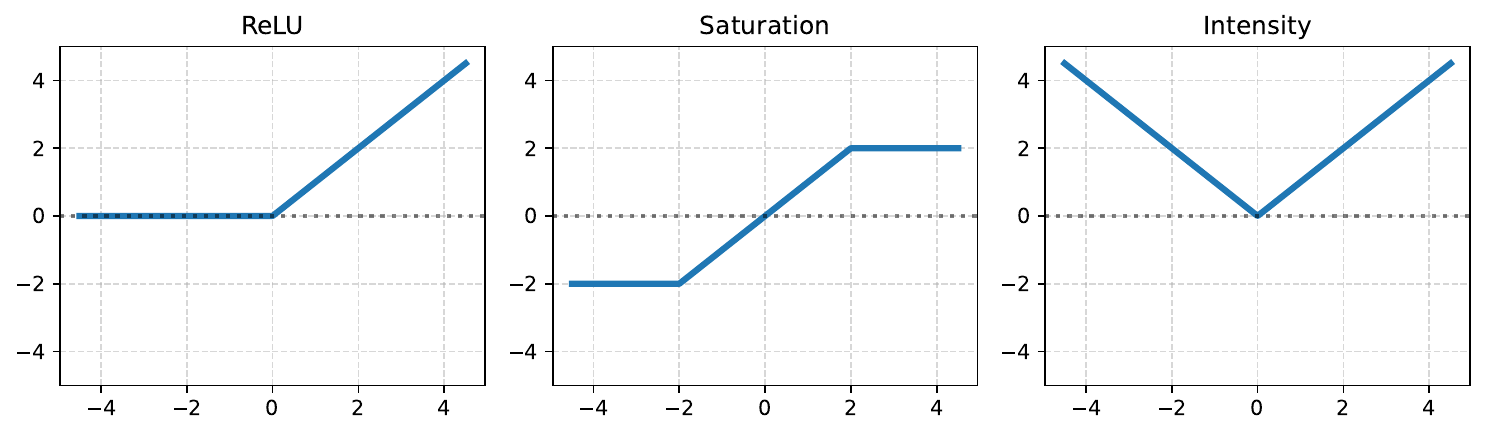}
    \caption{The plots show the non-linear activation functions used in A.-C.: The ReLU function, the saturation function with $\lambda = 2$, and the magnitude (intensity) function.}
    \label{fig:1}
\end{figure}

\noindent
\textbf{B. $\lambda$-Saturation or clipping}.
In this setting, it is assumed that measurements can only be recorded accurately up to a certain magnitude threshold level $\lambda>0$ and the maximum $\pm\lambda$ is returned whenever it is exceeded.  This is a common occurrence in signal processing and electrical engineering when using a measurement device with a limited dynamic range.
The corresponding activation function is the saturation function given by
\begin{align}\label{eq:satdef}
    \sigma_{\lambda}(t) = \operatorname{sign}(t)\cdot\min\left(\vert t\vert ,\lambda\right)
\end{align}
Since all the measurements for vectors with very large norm would be saturated, we consider inputs only from a closed ball in $\RR^n$ \cite{alharbi2024sat, Foucart1,Foucart2,Laska}. Using a scaling argument, we may without loss of generality restrict the domain $X$ to be the unit ball $\mathbb{B}_{\RR^n}=\{x\in \RR^n: \Vert x\Vert \leq 1\}$.\\

\noindent
\textbf{C. Phase retrieval}.
In applications such as electron microscopy and coherent-diffraction imaging, researchers often work with devices which can only record the magnitude of the measurement values, and the phase information is lost. This map is sometimes called the intensity measurement operator \cite{balan2006phase}. Accordingly, the activation function is given by $t\mapsto |t|$.
Since, no matter what, we are only able to reconstruct the input up to a global sign, we consider the quotient space $\RR^n/_\sim$ as input domain, where the equivalence relation $\sim$ is given by $x\sim y$ if $x=\lambda y$ for some scalar $|\lambda|=1$.\\

For all three cases, the injectivity characterizations of the associated non-linear measurement operators have a similar frame theoretic  nature~\cite{alharbi2024sat,balan2006phase,bruna2014pooling,haider2024relu,puthawala2022relu}. 
To ensure numerical stability of the recovery maps in applications, the non-linear measurement operators must not only be one-to-one but also be bi-Lipschitz embeddings. 
Recall that a map $f:X\rightarrow Y$ between metric spaces $(X,d_X)$ and $(Y,d_Y)$ is called {\em bi-Lipschitz} if  
there are constants $\kappa_L,\kappa_U>0$ such that
\begin{equation}
    \kappa_L d_X(x,y) \leq d_Y(f(x),f(y)) \leq \kappa_U d_X(x,y)\hspace{.2cm}\textrm{ for all }x,y\in X.
\end{equation}
We call $\kappa_L$ a lower Lipschitz bound for $f$ and $\kappa_U$ an upper Lipschitz bound for $f$. A bound is called \textit{optimal} if the corresponding inequality is sharp.
If the measurement vectors $(\phi_i)_{i=1}^m$ form a frame for $\RR^n$ then the linear map $x\mapsto (\langle x,\phi_i\rangle)_{i=1}^m$ is a bi-Lipschitz embedding from $\RR^n$ into $\RR^m$ and the optimal bi-Lipschitz constants correspond exactly to the linear map's smallest and largest singular values. The optimal \textit{frame bounds} of $(\phi_i)_{i=1}^m$ are the greatest positive value $A$ and least value $B$ so that $A\|x\|^2
\leq \sum_{i=1}^m |\langle x,\phi_i\rangle|^2\leq B\|x\|^2$ holds for all $x\in \RR^n$. Thus, $A$ and $B$ are the optimal frame bounds of $(\phi)_{i=1}^m$ if and only if
$\sqrt{A}$ and $\sqrt{B}$ are the optimal bi-Lipschitz bounds for the linear measurement map $x\mapsto (\langle x,\phi_i\rangle)_{i=1}^m$.  
We choose to use the terminology of frame bounds, although some papers on the topic express the corresponding inequalities in terms of singular values \cite{bandeira2014savingphase,puthawala2022relu}. 

All three of the activation functions that we consider are non-expansive mappings, i.e. they have an upper Lipschitz bound of $1$.  This simple property is very useful when working with non-linear operators, and it is directly exploited in \cite{CombettesWoodstock1} to solve a general class of inverse problems.   
For us, the fact that the activation functions are $1$-Lipschitz give that
if $B$ is the upper frame bound of $(\phi_i)_{i=1}^m$, then
the optimal upper Lipschitz bound for the non-linear measurement operator $\Phi$ always satisfies $\kappa_U\leq \sqrt{B}$. 
Obtaining lower Lipschitz bounds for $\Phi$ is much more difficult, and this will be the main focus of our paper. The following is our main result and provides the first dimension independent lower Lipschitz bounds of $\kappa_L$ for ReLU layers and $\lambda$-saturation.\\ 



\begin{tcolorbox}[title = {\textbf{Theorem 1 (Optimal lower Lipschitz bounds).}}]
\textit{Let $(\phi_i)_{i\in I}$ with $|I|=m$ be a frame for $\RR^n$, and let $\rho$ be the activation function for either a ReLU layer with bias $\alpha$ or $\lambda$-saturation.  Let $X\subseteq \RR^n$ and let $D\subseteq\RR$ be the closed interval where $\rho$ is one-to-one.  For each $x\in X$ let $I_\rho(x)=\{i\in I: \rho(\langle x,\phi_i\rangle)\in D\}$ and let $A(x)$ be the lower frame bound of $(\phi_i)_{i\in I_\rho(x)}$.
We set $A_\rho=\min _{x\in X}A(x)$.
\vspace{1em}
\newline
(i) If $\rho$ is the activation function for an injective ReLU layer with bias $\alpha$ and $\kappa_L$ is the optimal lower Lipschitz bound for the map $x\mapsto\left(\rho(\langle x,\phi_i\rangle)\right)_{i\in I}$ from $X=\RR^n$ to $\RR^{m}$ then}
    \begin{equation}\label{E:relu}
        \tfrac{1}{2}\sqrt{A_\rho} \leq \kappa_L \leq \sqrt{A_\rho}.
    \end{equation}
\textit{(ii) If $\rho$ is the activation function for an injective saturated measurement operator with saturation level $\lambda>0$ and $\kappa_L$ is the optimal lower Lipschitz bound for the map $x\mapsto\left(\rho(\langle x,\phi_i\rangle)\right)_{i\in I}$ from $X=\mathbb{B}_{\RR^n}$ to $\RR^{m}$ then}
    \begin{equation}\label{E:sat}
        \min\left\{\tfrac{1}{2}\sqrt{A_\rho}, \lambda\right\} \leq \kappa_L \leq \sqrt{A_\rho}.
    \end{equation}
\end{tcolorbox}

In the case of ReLU layers with $\alpha=\mathbf{0}$, bounds for $\kappa_L$ in terms of frame bounds of subcollections of $(\phi_i)_{i\in I}$ was first given in \cite{bruna2014pooling}.  However, the provided lower bound could be $0$ in some circumstances, even when the map was bi-Lipschitz.  This  was rectified  in~\cite{puthawala2022relu} where it was proven that $\sqrt{\tfrac{A_\rho}{2m}} \leq \kappa_L$, thus providing the first lower bound on $\kappa_L$ which is non-zero if and only if $\kappa_L$ is non-zero.  The necessity of $m$ in this inequality was investigated in \cite{RauschMasters}, where it was conjectured that one may replace $m$ with $2$.  
Our bound in \eqref{E:relu} verifies this conjecture and provides the first  dimension independent lower bound on $\kappa_L$ for ReLU layers. 

For $\lambda$-saturation, it was known that if $\lambda$ is greater than a critical threshold then the non-linear measurement operator is one-to-one if and only if it is bi-Lipschitz \cite{alharbi2024sat}.  However,  lower bounds for $\kappa_L$ were only known for certain random frames \cite{Foucart2,Foucart1,Laska}, and whether or not the saturated measurement operator is bi-Lipschitz at the critical threshold was listed as an open problem in \cite{alharbi2024sat}.  Our bound in \eqref{E:sat} gives the first explicit lower bound on $\kappa_L$, and consequently
solves the open problem  about the critical threshold.\\

The paper is structured as following. In Section \ref{sec:2}, we review the known injectivity characterizations for ReLU layers, saturation, and phase retrieval, and put them into direct relation as non-linear measurement operators.
This sets the stage for studying other settings that allow this formulation in a similar manner, e.g., gating \cite{ehler2011shrinkage}, or modulo sampling \cite{bhandari2020mod}. In Section \ref{sec:bounds}, we prove Theorem 1 on the optimal lower Lipschitz bounds for ReLU layers and saturation (Corollary \ref{cor:lip} and \ref{cor:lip2}). 
We show as well how our estimates correspond nicely to the known lower Lipschitz bounds for phase retrieval~\cite{bandeira2014savingphase}.  Notably, even though we obtain directly analogous lower Lipschitz bounds for all three activation functions, our proof for ReLU and $\lambda$-saturation is very different from the previously known proof for phase retrieval. 

\section{Injectivity of non-linear measurements}\label{sec:2}
For the derivation of the optimal lower Lipschitz bounds for ReLU layers and saturated measurements we make explicit use of the injectivity characterizations of the associated non-linear measurement operators given in~\cite{alharbi2024sat,haider2024relu,puthawala2022relu}. In a nutshell, we obtain injectivity if and only if for any input there are sufficiently many coordinates in the output that are not affected by the activation. In this section, we recall these results as a preparation and find that also the injectivity characterization for phase retrieval can be formulated in a similar manner. This emphasizes the proposed unified perspective as non-linear measurement operators.

\subsection{Frames and linear measurements}
We will use standard language and tools  from frame theory, and  we recommend ~\cite{casazza2012finiteframes,christensen2003frames} as references.
A collection of vectors $(\phi_i)_{i\in I}$ in $\RR^n$ with $|I|=m\geq n$ is called a \textit{frame} for $\RR^n$ if there exist constants $0<A\leq B<\infty$ with
\begin{equation}\label{eq:frame}
    A\|x\|^2\leq \sum_{i\in I}\vert \xphi \vert^2 \leq B\| x\|^2\hspace{.2cm}\textrm{ for all }x\in \RR^n.
\end{equation}
The vectors $\phi_i$ are called \textit{frame elements} and the inner products, or measurements $\xphi$ are called \textit{frame coefficients} for $x$. The bounds $A$ and $B$ are called lower and upper frame bounds, respectively. If an inequality is sharp then the corresponding bound is called an \textit{optimal frame bound}. An equivalent formulation of \eqref{eq:frame} is that the analysis operator associated with $(\phi_i)_{i\in I}$, given by
\begin{align}
\begin{split}
    \Theta:\RR^n&\rightarrow \mathbb{R}^m\\
    x&\mapsto\left(\langle x,\phi_i\rangle\right)_{i\in I},
\end{split}
\end{align}
is one-to-one and the optimal frame bounds coincide with the smallest and largest eigenvalues of the frame operator $\Theta^*\Theta$. Within the paradigm of this paper, the analysis operator is the \textit{linear measurement operator}, i.e., measurement without activation.

\subsection{Injectivity of ReLU layers and saturation recovery} For a frame $(\phi_i)_{i\in I}$, a bias vector $\alpha\in \RR^m$, and the ReLU function defined in \eqref{eq:reludef}, the non-linear measurement operator that corresponds to a ReLU layer is given by
\begin{align}
\begin{split}
    \Ca:\RR^n&\rightarrow \mathbb{R}^m\\
    x&\mapsto\left(\relu(\langle x,\phi_i\rangle-\alpha_i)\right)_{i\in I}.
\end{split}
\end{align}
To ensure injectivity of $\Ca$ we need to check if for any input $x\in \RR^n$, the set of frame elements that are not affected by ReLU -- i.e., \textit{activated} -- forms a frame. Hence, we are interested in those frame elements that satisfy $\xphi - \alpha_i\geq 0$.
Denoting the set of coordinates that get activated for an $x\in \RR^n$ as
\begin{equation}
    \I(x) = \{i\in I: \xphi \geq \alpha_i\},
\end{equation}
we have the following characterization of the injectivity of $\Ca$~\cite{haider2023relu,haider2024relu,puthawala2022relu}.
\begin{theorem}[Puthawala et al. 2022, Haider et al., 2024]\label{thm:reluinj}
Let $(\phi_i)_{i\in I}$ be a frame for $\RR^n$ and $\alpha\in\RR^m$ a fixed bias. The following are equivalent.
\begin{enumerate}[(i)]
    \item The ReLU layer $\Ca$ is one-to-one.
    \item For all $x\in \RR^n$, the activated collection $(\phi_i)_{i\in \I(x)}$ is a frame for $\RR^n$.
\end{enumerate}
\end{theorem}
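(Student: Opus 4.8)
The plan is to prove the two implications separately, using throughout the elementary fact that a finite collection of vectors in $\RR^n$ is a frame if and only if it spans $\RR^n$ (equivalently, its analysis operator is injective). For (i) $\Rightarrow$ (ii) I would argue by contraposition. Suppose $(\phi_i)_{i\in\I(x_0)}$ fails to span $\RR^n$ for some $x_0$; then there is a nonzero $v$ with $\langle v,\phi_i\rangle=0$ for all $i\in\I(x_0)$. Moving along $x_0+tv$ leaves every activated coordinate unchanged, since $\langle x_0+tv,\phi_i\rangle=\langle x_0,\phi_i\rangle\geq\alpha_i$ for $i\in\I(x_0)$, while every inactive coordinate satisfies $\langle x_0,\phi_i\rangle<\alpha_i$ strictly and so remains inactive for $|t|$ small by continuity. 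Hence $\Ca(x_0+tv)=\Ca(x_0)$ for small $t\neq 0$, and $\Ca$ is not injective.

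For (ii) $\Rightarrow$ (i), I assume (ii) and suppose $\Ca(x)=\Ca(y)$; set $u=x-y$ and assume $u\neq 0$ toward a contradiction. I would partition $I$ into $R=\I(x)\cap\I(y)$, $P=\I(x)\setminus\I(y)$, $Q=\I(y)\setminus\I(x)$, and the remaining indices. Comparing the coordinates of $\Ca(x)$ and $\Ca(y)$ coordinatewise yields precise information: on $R$ both entries equal the corresponding affine values, forcing $\langle u,\phi_i\rangle=0$; on $P$ the entry of $\Ca(y)$ is $0$ while that of $\Ca(x)$ is $\langle x,\phi_i\rangle-\alpha_i$, so equality forces $\langle x,\phi_i\rangle=\alpha_i$ and hence $\langle u,\phi_i\rangle>0$; symmetrically $\langle y,\phi_i\rangle=\alpha_i$ and $\langle u,\phi_i\rangle<0$ on $Q$.

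The key step is then to track the activation pattern along the open segment $z(s)=y+su$ for $s\in(0,1)$. Each $g_i(s)=\langle z(s),\phi_i\rangle-\alpha_i$ is affine in $s$, and using the boundary identities just derived one checks that the coordinates in $P$ stay inactive until $s=1$ (their graph reaches $\alpha_i$ exactly at the endpoint), those in $Q$ switch off immediately after $s=0$, those outside $\I(x)\cup\I(y)$ stay inactive throughout by monotonicity, and those in $R$ stay active. Consequently $\I(z(s))=R$ for every $s\in(0,1)$. By hypothesis (ii), $(\phi_i)_{i\in R}$ is a frame and hence spans $\RR^n$; but $\langle u,\phi_i\rangle=0$ for all $i\in R$ then forces $u=0$, the desired contradiction.

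I expect the main obstacle to be this middle step rather than the endpoint bookkeeping. The naive sign information alone, namely $\langle u,\phi_i\rangle\geq 0$ on $\I(x)$ and $\langle u,\phi_i\rangle\leq 0$ on $\I(y)$, is too weak to conclude $u=0$, since a spanning collection can lie entirely in a closed half-space. The essential extra input is the exact boundary equalities $\langle x,\phi_i\rangle=\alpha_i$ on $P$ and $\langle y,\phi_i\rangle=\alpha_i$ on $Q$ forced by $\Ca(x)=\Ca(y)$; these are precisely what make the activation set collapse to the constant set $R$ along the open segment, at which point the frame hypothesis applied at a single interior point finishes the argument.
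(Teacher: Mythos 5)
Your proof is correct and complete. Note first that the paper does not actually prove Theorem \ref{thm:reluinj}; it is quoted from \cite{puthawala2022relu} and \cite{haider2024relu}, so there is no in-paper argument to compare against line by line. Both of your directions are sound: for (i)$\Rightarrow$(ii) the perturbation $x_0+tv$ along a nonzero $v$ annihilated by the activated vectors works because the inactive coordinates satisfy the strict inequality $\langle x_0,\phi_i\rangle<\alpha_i$ and $I$ is finite, so a single small $t$ suffices; for (ii)$\Rightarrow$(i) you correctly identify that the coordinatewise equalities $\Ca(x)=\Ca(y)$ force $\langle x,\phi_i\rangle=\alpha_i$ on $P=\I(x)\setminus\I(y)$ and $\langle y,\phi_i\rangle=\alpha_i$ on $Q=\I(y)\setminus\I(x)$, which is exactly the extra input needed beyond the (insufficient) half-space sign information, and which makes $\I(z(s))$ collapse to $R=\I(x)\cap\I(y)$ for every interior point of the segment. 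The one point worth flagging is that your middle step is precisely the mechanism the paper exploits quantitatively in the proof of Theorem \ref{thm:relu}: the containments $R\subseteq\I\left(\tfrac{x+y}{2}\right)\subseteq\I(x)\cup\I(y)$ are the paper's \eqref{eq:relu1} and \eqref{eq:relu2}, and under hypothesis (ii) the resulting estimate $\tfrac{1}{4}A_\alpha\left(\tfrac{x+y}{2}\right)\|x-y\|^2\leq\|\Ca(x)-\Ca(y)\|^2$ already implies injectivity. So your qualitative argument, evaluated at the single point $s=\tfrac12$, is the non-quantitative shadow of the paper's main Lipschitz estimate; the only thing the quantitative version adds is that it does not need the exact boundary identities, only the inequality \eqref{E:half}.
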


For $\lambda$-saturation, we ask for the injectivity of the saturated measurement operator, which, for a saturation level $\lambda>0$ and the saturation function $\sigma_{\lambda}$ defined in \eqref{eq:satdef} is given by
\begin{align}
\begin{split}
    \Sl:\mathbb{B}_{\RR^n}&\rightarrow \mathbb{R}^m\\
    x&\mapsto\left(\sigma_{\lambda}(\xphi)\right)_{i\in I}.
\end{split}
\end{align}
Analogous to the ReLU case, for any $x\in\mathbb{B}_{\RR^n}$ we have to check if the set of frame elements that are not affected by the saturation function -- i.e., \textit{activated} -- forms a frame. For this, the magnitudes of the frame coefficients has to lie below the saturation level $\lambda$. Denoting the set of all non-saturated coordinates for $x$ as
\begin{equation}
    \Il(x) = \{i\in I: \vert\xphi \vert \leq \lambda\},
\end{equation}
we have the following characterization~\cite{alharbi2024sat}.

\begin{theorem}[Alharbi et al., 2024]\label{thm:satinj}
Let $(\phi_i)_{i\in I}$ be a frame for $\RR^n$ and $\lambda>0$ be a fixed saturation level. The following are equivalent.
\begin{enumerate}[(i)]
    \item The saturated measurement operator $\Sl$ is one-to-one on $\mathbb{B}_{\RR^n}$.
    \item For all $x\in \mathbb{B}_{\RR^n}$, the unsaturated collection $(\phi_i)_{i\in \Il(x)}$ is a frame for $\RR^n$.
\end{enumerate}
\end{theorem}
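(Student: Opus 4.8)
The plan is to prove both implications, using throughout that a finite collection in $\RR^n$ is a frame if and only if it spans. The direction (ii) $\Rightarrow$ (i) is where the essential idea lies, and I would exploit the convexity of the ball together with the monotonicity of $\sigma_\lambda$ through a midpoint argument. Suppose $\Sl(x)=\Sl(y)$ with $x,y\in\mathbb{B}_{\RR^n}$, and set $w=\tfrac12(x+y)$, which again lies in $\mathbb{B}_{\RR^n}$ by convexity. For each $i$ we have $\sigma_\lambda(\xphi)=\sigma_\lambda(\yphi)$; since $\sigma_\lambda$ is nondecreasing, equals the identity on $[-\lambda,\lambda]$, and is constant $\pm\lambda$ outside, this forces one of three cases: $\xphi=\yphi$, or both coefficients are $\geq\lambda$, or both are $\leq-\lambda$. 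The point of passing to the midpoint is that in the two saturated cases, \emph{unless the two coefficients coincide}, the average $\langle w,\phi_i\rangle$ is \emph{strictly} beyond $\lambda$ in absolute value, so that $i\notin\Il(w)$. Consequently every surviving index $i\in\Il(w)$ falls into the first case, giving $\langle x-y,\phi_i\rangle=0$. Since $(\phi_i)_{i\in\Il(w)}$ is a frame by hypothesis, it spans $\RR^n$, and therefore $x-y=0$.

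For (i) $\Rightarrow$ (ii) I would argue the contrapositive and reduce to a local perturbation. Assume $(\phi_i)_{i\in\Il(x_0)}$ fails to be a frame for some $x_0\in\mathbb{B}_{\RR^n}$; being a finite family, it then fails to span, so there is $v\neq0$ orthogonal to every $\phi_i$ with $i\in\Il(x_0)$. Note $x_0\neq0$ (at the origin every index survives and the full frame spans), so the saturated set $S=I\setminus\Il(x_0)$ is nonempty and $\mu:=\min_{i\in S}|\langle x_0,\phi_i\rangle|>\lambda$. The obstacle here is purely geometric: if $x_0$ lies on the sphere and $v$ is tangent, then $x_0\pm tv$ leaves the ball, so $x_0$ itself need not witness non-injectivity. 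I would remove this by first retracting radially to an interior point: choosing $\epsilon\in(0,1)$ with $(1-\epsilon)\mu>\lambda$ and setting $x_1=(1-\epsilon)x_0$ rescales all coefficients by $1-\epsilon$, which keeps every saturated coefficient strictly beyond $\lambda$ and every surviving coefficient inside $[-\lambda,\lambda]$; hence $\Il(x_1)=\Il(x_0)$ while $\|x_1\|<1$.

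With an interior failing point $x_1$ in hand, the conclusion is immediate: for $|t|$ small the points $x_1\pm tv$ stay in the open ball, the coefficients indexed by $\Il(x_1)$ are frozen (as $v\perp\phi_i$) and remain unsaturated, while the coefficients indexed by $S$ stay strictly saturated with unchanged sign. Thus $\Sl$ is constant along this short segment, producing two distinct points of $\mathbb{B}_{\RR^n}$ with the same image and contradicting injectivity.

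I expect the main obstacle to be exactly the interaction between the \emph{closed} ball boundary and the pointwise-varying survival set $\Il(\cdot)$: the tangent-at-boundary case shows that a failing point need not itself be a witness, and one must either retract to the interior (as above) or otherwise manufacture a genuine pair of preimages. The monotonicity-plus-midpoint step is what keeps the converse clean, and I would check the strictness at the threshold $\lambda$ with particular care, since that is where the closed inequality $|\xphi|\leq\lambda$ and the boundary coefficients equal to $\pm\lambda$ could otherwise conspire to break the argument.
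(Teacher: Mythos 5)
Your proof is correct. Note that the paper does not actually prove this statement itself: Theorem \ref{thm:satinj} is imported verbatim from Alharbi et al.\ 2024 (reference \texttt{alharbi2024sat}), so there is no in-paper proof to compare against. That said, your argument for (ii) $\Rightarrow$ (i) via the midpoint $w=\tfrac{1}{2}(x+y)$ is exactly the mechanism the authors exploit quantitatively in Section \ref{sec:bounds}: the observation that indices saturated with the same sign but unequal coefficients are pushed \emph{strictly} outside $\Il(w)$, so that every $i\in\Il(w)$ satisfies $\langle x-y,\phi_i\rangle=0$, is the qualitative core of the inclusions \eqref{eq:sat1}--\eqref{eq:sat2} in the proof of Theorem \ref{thm:sat}. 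Your handling of (i) $\Rightarrow$ (ii) is also sound; the radial retraction $x_1=(1-\epsilon)x_0$ to resolve the tangent-at-boundary obstruction is the right fix, and it works precisely because the saturated coefficients satisfy a \emph{strict} inequality $|\langle x_0,\phi_i\rangle|>\lambda$ (so they survive a small shrinkage) while the unsaturated ones satisfy the closed inequality (so shrinking cannot eject them from $\Il$). The one place a reader might pause is your claim that $\Sl$ is constant along the segment $x_1+tv$: this holds because the coordinates in $\Il(x_1)$ are literally frozen by $v\perp\phi_i$, and the coordinates outside remain strictly saturated with unchanged sign for $|t|$ small, so their $\sigma_\lambda$-values stay pinned at $\pm\lambda$; you state both points, so the argument is complete.
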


\subsection{An excursion to other non-linear measurement}
In both of the above discussed cases, the characterization of the injectivity reads as that the activated (input-dependent) sub-collections of the measurement vectors $(\phi_i)_{i\in I}$ need to be frames. This argument can be naturally transferred to other suitable activations functions, too. We demonstrate this for the complementary setting to $\lambda$-saturation, i.e., where measurements are only recorded accurately if they are larger than a certain threshold $\mu>0$. The corresponding \textit{gating} activation function is given by $\gamma_{\mu}(t)=t$ if $|t|\geq \mu$ and $\gamma_{\mu}(t)=0$ if $|t|< \mu$.
Since very small vectors will never be gated -- i.e., \textit{activated} -- we consider $X=\RR^n\setminus r\mathbb{B}_{\RR^n}$ for $r>0$ as an input domain here. By a scaling argument, we can without loss of generality assume $r=1$. The gated measurement operator is given by
\begin{align}
\begin{split}
    G_\mu:\RR^n\setminus \mathbb{B}_{\RR^n}&\rightarrow \mathbb{R}^m\\
    x&\mapsto\left(\gamma_{\mu}(\xphi)\right)_{i\in I}.
\end{split}
\end{align}
Letting $I_\mu(x)=\{i\in I: \vert\xphi \vert \geq \mu\}$, we can use the same proof techniques as in \cite{alharbi2024sat} and \cite{haider2024relu} to show that for a fixed gating level $\mu>0$, the gated measurement operator $G_\mu$ is one-to-one on $\RR^n\setminus\mathbb{B}_{\RR^n}$ if and only if for any $x\in \RR^n\setminus \mathbb{B}_{\RR^n}$, the gated collection $(\phi_i)_{i\in I_\mu(x)}$ is a frame for $\RR^n$. The key here is that the domain $\RR^n\setminus \mathbb{B}_{\RR^n}$ is an open set in $\RR^n$.\\

For (real) phase retrieval, the characterizing property of the underlying frame to ensure injectivity is the famous complement property \cite{balan2006phase}. It turns out that we can reformulate it so that the characterization reads analogously to Theorems \ref{thm:reluinj}, \ref{thm:satinj}. We elaborate on this in the following.

\subsection{Injectivity of phase retrieval - revisited}\label{sec:prinj} In phase retrieval, the non-linear measurement operator is the \textit{intensity measurement operator}, arising as
\begin{align}
\begin{split}
    \M:\RR^n/_\sim&\rightarrow \mathbb{R}^m\\
    [x]_\sim&\mapsto\left(|\langle x,\phi_i\rangle|\right)_{i\in I}.
\end{split}
\end{align}
It is well-known that $\M$ is one-to-one if and only if for any subset $J\subseteq I$, either $(\phi_i)_{i\in J}$ or $(\phi_i)_{i\in J^c}$ is a frame for $\RR^n$. This condition is called the \textit{complement property}.
While the complement property considers all partitions $\{J,J^c\}$ of $I$, in the proof of Theorem 2.8 in \cite{balan2006phase} where the equivalence was first proven, only partitions of a specific form are considered.
In particular, one has to ensure that for any pair $x,y\in \RR^n$ either the collection of frame elements where the frame coefficients for $x$ and $y$ have the same signs is a frame, or the collection where they have opposite signs is a frame.
We shall denote the corresponding sets of coordinates as
\begin{align}
    \label{eq:I+}
    I^+(x,y) &= \{i\in I: \xphi \yphi \geq 0\},\\
    \label{eq:I-}
    I^-(x,y) &= \{i\in I: \xphi \yphi \leq 0\}.
\end{align}
Note that if $\langle x,\phi_i\rangle\langle y,\phi_i\rangle=0$ for some $i\in I$ then
$\{I^+(x,y),I^-(x,y)\}$ is not a partition of $I$. This technicality does not interfere with the result though as it occurs on a nowhere dense subset.
Using this perspective, we may reformulate the injectivity characterization of $\M$ stated in~\cite{balan2006phase} in the following sense.
\begin{theorem}[Reformulation of Balan et al., 2006]\label{thm:phaseinj} Let $(\phi_i)_{i\in I}$ be a frame for $\RR^n$.
The following are equivalent.
\begin{enumerate}
    \item[(i)] The intensity measurement operator $\M$ is one-to-one.
    \item[(ii)] For any $J\subseteq I$, either $(\phi_i)_{i\in J}$ or $(\phi_i)_{i\in J^c}$ is a frame for $\RR^n$.
    \item[(ii*)] For any $x,y\in \RR^n$, either $(\phi_i)_{i\in I^+(x,y)}$ or $(\phi_i)_{i\in I^-(x,y)}$ is a frame for $\RR^n$.
\end{enumerate}
\end{theorem}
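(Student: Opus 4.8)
The plan is to reduce everything to one algebraic identity together with the change of variables $u=x+y$, $v=x-y$. Since $\mathcal{M}([x]_\sim)=\mathcal{M}([y]_\sim)$ means $|\langle x,\phi_i\rangle|=|\langle y,\phi_i\rangle|$ for every $i$, and this is equivalent to $\langle x,\phi_i\rangle^2=\langle y,\phi_i\rangle^2$, i.e. to $\langle x+y,\phi_i\rangle\,\langle x-y,\phi_i\rangle=0$, I would restate the non-injectivity of $\mathcal{M}$ as: there exist $u\neq 0$ and $v\neq 0$ (namely $u=x+y$, $v=x-y$, which are both nonzero exactly when $x\neq\pm y$) with $\langle u,\phi_i\rangle\langle v,\phi_i\rangle=0$ for all $i$. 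The equivalence (i)$\Leftrightarrow$(ii) is then the classical complement property of Balan et al., so the genuinely new task is to tie the pairwise sign-sets $I^+(x,y)$ and $I^-(x,y)$ to this picture, which I would do by proving (i)$\Leftrightarrow$(ii*) directly.

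The bridge is that, for a pair $x,y$ with $|\langle x,\phi_i\rangle|=|\langle y,\phi_i\rangle|$, membership $i\in I^+(x,y)$ forces $\langle x-y,\phi_i\rangle=0$ (equal magnitude and equal sign give $\langle x,\phi_i\rangle=\langle y,\phi_i\rangle$), while $i\in I^-(x,y)$ forces $\langle x+y,\phi_i\rangle=0$. Hence for (ii*)$\Rightarrow$(i) I would argue by contraposition: given non-injectivity, take a witnessing pair $x\neq\pm y$; then $x-y\neq 0$ is orthogonal to every $\phi_i$ with $i\in I^+(x,y)$, so $(\phi_i)_{i\in I^+(x,y)}$ fails to span and is not a frame, and likewise $x+y\neq 0$ annihilates $(\phi_i)_{i\in I^-(x,y)}$, so neither collection is a frame and (ii*) fails.

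For the converse I would reverse this construction. If (ii*) fails for some pair, there exist a nonzero $v$ orthogonal to all $\phi_i$ with $i\in I^+$ and a nonzero $u$ orthogonal to all $\phi_i$ with $i\in I^-$; setting $x=\tfrac12(u+v)$ and $y=\tfrac12(u-v)$ gives $x\neq\pm y$ (since $u,v\neq 0$), with $\langle x,\phi_i\rangle=\langle y,\phi_i\rangle$ on $I^+$ and $\langle x,\phi_i\rangle=-\langle y,\phi_i\rangle$ on $I^-$; because $I^+\cup I^-=I$ this yields $|\langle x,\phi_i\rangle|=|\langle y,\phi_i\rangle|$ for all $i$, hence non-injectivity. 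This closes (i)$\Leftrightarrow$(ii*). To fold in (ii), I would note (ii)$\Rightarrow$(ii*) is immediate: taking $J=I^+(x,y)$ gives $J^c\subseteq I^-(x,y)$, so by (ii) either $(\phi_i)_{i\in I^+(x,y)}$ is a frame or $(\phi_i)_{i\in J^c}$ is, and a superset of a frame is a frame so then $(\phi_i)_{i\in I^-(x,y)}$ is a frame. Combined with the classical (i)$\Leftrightarrow$(ii), all three are equivalent.

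\textbf{Main obstacle.} The only delicate point is the technicality flagged in the text: on coordinates with $\langle x,\phi_i\rangle\langle y,\phi_i\rangle=0$ the sets $I^+(x,y)$ and $I^-(x,y)$ overlap and need not form a genuine partition. I would dispose of it by observing that such coordinates satisfy $\langle x-y,\phi_i\rangle=0$ and $\langle x+y,\phi_i\rangle=0$ simultaneously, so they introduce no inconsistency in either direction, and that the arguments only ever use $I^+\cup I^-=I$, never disjointness. The remaining care is the bookkeeping that $x\neq\pm y$ throughout, which is automatic from $u=x+y\neq 0$ and $v=x-y\neq 0$.
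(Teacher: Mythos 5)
Your proof is correct, and it is essentially the argument the paper has in mind: the paper states the theorem without a detailed proof, deferring to the proof of Theorem 2.8 in Balan et al., which is exactly your $u=x+y$, $v=x-y$ construction, and the surrounding prose sketches the same implication you prove (that $\M[x]_\sim=\M[y]_\sim$ together with $(\phi_i)_{i\in I^+(x,y)}$ being a frame forces $x=y$, and with $(\phi_i)_{i\in I^-(x,y)}$ being a frame forces $x=-y$). You have simply written out in full the details the paper leaves implicit, including the correct handling of the non-partition technicality.
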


Point $(ii^*)$ now says that injectivity of $\M$ can be determined by checking the frame condition for specific sub-collections of $(\phi_i)_{i\in I}$ that depend on $x,y$ - just as for the injectivity of ReLU layers and saturation recovery. While this demonstrates that the conditions for the injectivity of the non-linear measurement operators are of similar nature for all three problems, the one for phase retrieval stands out in a particular way as it depends on one of two sets of vectors being a frame.
The reason for this is that the intensity measurement operator $\M$ is defined on the quotient space $\RR^n/_\sim$ and we need to compensate for the problem of not a priori knowing whether we wish to compare $x$ to $y$ or $x$ to $-y$. Hence, to prove that $\M$ is injective on $\RR^n/_\sim$, we need to ensure that if $\M [x]_\sim=\M [y]_\sim$ for some $x,y\in \RR^n$ then $x=y$ or $x=-y$. If  $\M [x]_\sim=\M [y]_\sim$ and $(\phi_i)_{i\in I^+(x,y)}$ is a frame then it follows that $x=y$. Otherwise, if $\M [x]_\sim=\M [y]_\sim$ and $(\phi_i)_{i\in I^-(x,y)}$ is a frame then $x=-y$. This also contributes to the difficultly in building a reconstruction map for phase retrieval ~\cite{prusa2017phasereconstruction, candes2013phaselift, nenov2024faster}.  
On the other hand, in  the case of ReLU layers and saturation recovery, the activated frame elements can not only be used to determine injectivity but also to build a locally linear reconstruction map~\cite{alharbi2024sat, haider2024relu}.

\section{Optimal lower Lipschitz bounds}\label{sec:bounds}
This section represents the main part of the paper. We will derive optimal lower Lipschitz bounds for ReLU layers and saturated measurements, and also present a slightly improved version of the known stability result for phase retrieval.

\subsection{Stability of ReLU layers}\label{sec:relu}
From the characterization in Theorem \ref{thm:reluinj} we can immediately deduce that if a ReLU layer $\Ca$ is one-to-one on $\RR^n$ then every activated collection $(\phi_i)_{i\in \I(x)}$ has a positive lower frame bound. We shall refer to the optimal lower frame bound of $(\phi_i)_{i\in \I(x)}$ as $A_\alpha(x)$. Since there are only finitely many different combinations of activated weight vectors, there is a vector $x^*\in\RR^n$ such that $A_\alpha(x^*)\leq A_\alpha(x)$ for all $x\in \RR^n$. We denote this value as
\begin{equation}
    A_\alpha=\min_{x\in \RR^n} A_\alpha(x).
\end{equation}
Note that for any injective ReLU layer $\Ca$, we can find an open set $U\subseteq\RR^n$ so that $\I(x)=\I(y)$ for all $x,y\in U$ and that $A_\alpha=A_{\alpha(x)}$ for all $x\in U$.  We have that $\Ca$ is affine linear on $U$, and as $U$ is open,
$A_\alpha$ is the largest number so that $A_\alpha\|x-y\|^2\leq \Vert \Ca(x)-\Ca(y) \Vert^2$ for all $x,y\in U$.
Therefore, no lower Lipschitz bound can exceed $\sqrt{A_\alpha}$.

The following theorem provides a lower Lipschitz bound that is specific to the input vectors $x,y\in \RR^n$.
In \cite{puthawala2022relu}, the authors consider the line segment from $x$ to $y$ and use that it crosses through at most $m$ polytopes where $C_\alpha$ is linear.  Because of this, their lower Lipschitz bound is dependent on $m$.  We use a different argument which only considers the midpoint $\frac{x+y}{2}$.   This allows for a lower Lipschitz bound for $\Ca$ which is a constant multiple of $\sqrt{A_\alpha}$. 

\begin{theorem}\label{thm:relu}
Let $(\phi_i)_{i\in I}$ be a frame for $\RR^n$ and $\alpha\in\RR^m$ a bias vector.  For all $x,y\in\RR^n$, we have that
    \begin{equation}
        \tfrac{1}{4}A_\alpha({\tfrac{x+y}{2}})\; \|x-y\|^2 \leq \|\Ca(x)-\Ca(y)\|^2.
    \end{equation}
\end{theorem}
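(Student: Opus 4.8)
The plan is to exploit the symmetry of $x$ and $y$ about their midpoint. Writing $z=\tfrac{x+y}{2}$ and $v=\tfrac{x-y}{2}$, so that $x=z+v$, $y=z-v$, and $\|x-y\|^2=4\|v\|^2$, the claimed inequality is equivalent to $A_\alpha(z)\,\|v\|^2\le\|\Ca(z+v)-\Ca(z-v)\|^2$. Expanding the right-hand side coordinatewise and abbreviating $a_i=\langle z,\phi_i\rangle-\alpha_i$ and $b_i=\langle v,\phi_i\rangle$, the $i$-th entry of $\Ca(z+v)-\Ca(z-v)$ is exactly $\relu(a_i+b_i)-\relu(a_i-b_i)$, so the goal reduces to a suitable lower bound on $\sum_{i\in I}\bigl(\relu(a_i+b_i)-\relu(a_i-b_i)\bigr)^2$.

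The heart of the argument is a pointwise estimate: whenever $a\ge 0$ one has $|\relu(a+b)-\relu(a-b)|\ge|b|$ for every $b\in\RR$. I would prove this by a short case analysis. By the symmetry $b\mapsto -b$ it suffices to treat $b\ge 0$, in which case $a+b\ge 0$ gives $\relu(a+b)=a+b$, and then either $a\ge b$, where the difference equals $2b$, or $a<b$, where the difference equals $a+b\ge b$; in both cases the magnitude is at least $|b|$. The point of centering at $z$ is that every activated coordinate $i\in\I(z)$ satisfies $a_i=\langle z,\phi_i\rangle-\alpha_i\ge 0$, so this estimate applies to each such coordinate.

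To finish, I would discard the non-activated coordinates, whose terms are nonnegative, and apply the pointwise estimate on $\I(z)$:
\[
\|\Ca(z+v)-\Ca(z-v)\|^2\ \ge\ \sum_{i\in\I(z)}\bigl(\relu(a_i+b_i)-\relu(a_i-b_i)\bigr)^2\ \ge\ \sum_{i\in\I(z)}b_i^2\ =\ \sum_{i\in\I(z)}|\langle v,\phi_i\rangle|^2 .
\]
Since $A_\alpha(z)$ is by definition the lower frame bound of $(\phi_i)_{i\in\I(z)}$, the last sum is at least $A_\alpha(z)\,\|v\|^2$, which is exactly the required inequality after substituting back $v=\tfrac{x-y}{2}$ and $z=\tfrac{x+y}{2}$.

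The only genuinely load-bearing observation — and the reason the midpoint is essential rather than an endpoint — is twofold: centering at $z$ is what puts the $i$-th coordinate into the symmetric form $\relu(a_i+b_i)-\relu(a_i-b_i)$, and it simultaneously forces $a_i\ge 0$ for precisely the indices $i\in\I(z)$ whose lower frame bound we invoke. The pointwise inequality $|\relu(a+b)-\relu(a-b)|\ge|b|$ fails in general when $a<0$ (e.g.\ $a=-1$, $b=\tfrac12$ gives difference $0$), so both features of the midpoint are used together. I expect this pointwise estimate to be the crux; everything else is bookkeeping, and in particular no analysis of the line segment from $x$ to $y$ is needed, which is what removes the dependence on $m$ present in the bound of \cite{puthawala2022relu}.
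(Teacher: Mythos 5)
Your proof is correct and is essentially the same argument as the paper's: both hinge on the observation that every coordinate activated at the midpoint $\tfrac{x+y}{2}$ contributes at least $\tfrac14|\langle x-y,\phi_i\rangle|^2$ to $\|\Ca(x)-\Ca(y)\|^2$, and then invoke the lower frame bound of $(\phi_i)_{i\in \I(\frac{x+y}{2})}$. The paper reaches this via a case split on $\I(x)\setminus\I(y)$, $\I(y)\setminus\I(x)$, $\I(x)\cap\I(y)$ together with the inclusions $\I(x)\cap\I(y)\subseteq\I(\tfrac{x+y}{2})\subseteq\I(x)\cup\I(y)$, whereas you package the same content into the single scalar inequality $|\relu(a+b)-\relu(a-b)|\geq|b|$ for $a\geq 0$ --- a tidier formulation, but not a different proof.
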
 

\begin{proof}
We first claim that
\begin{equation}\label{eq:relu1}
    (\I(x)\cap \I(y))\subseteq\I\left(\tfrac{x+y}{2}\right).
\end{equation}
Indeed, if  $i\in (\I(x)\cap \I(y))$ then $\langle x,\phi_i\rangle\geq \alpha_i$ and $\langle y,\phi_i\rangle\geq \alpha_i$.  Thus, $\langle x+ y,\phi_i\rangle\geq 2\alpha_i$, showing that $(\I(x)\cap \I(y))\subseteq\I\left(\tfrac{x+y}{2}\right)$.

We now prove that
\begin{equation}\label{eq:relu2}
    \I\left(\tfrac{x+y}{2}\right) \subseteq (\I(x)\cup \I(y)).
\end{equation}
Let  $i\in \I(\tfrac{x+y}{2})$ then $\langle x+ y,\phi_i\rangle\geq 2\alpha_i$.  Either $\langle x,\phi_i\rangle\geq \alpha_i$ or  $\langle y,\phi_i\rangle\geq \alpha_i$.  This gives that $i\in (\I(x)\cup\I(y))$.

Furthermore, note that if $i\in \I(\tfrac{x+y}{2})$ then
   $$
    2(\langle x, \phi_i\rangle -\alpha_i)\geq   2\langle x, \phi_i\rangle-\langle x+ y,\phi_i\rangle
    =\langle x-y, \phi_i\rangle.
    $$
If we have as well that $i\in \big(\I(x)\setminus \I(y)\big)\cap\I(\tfrac{x+y}{2})$ then both sides of the above inequality are positive.  Hence,  we have that
\begin{equation}\label{E:half}
    2|\langle x,\phi_i\rangle-\alpha_i|\geq |\langle x-y,\phi_i\rangle|\hspace{.5cm}\textrm{ for all }i\in \big(\I(x)\setminus \I(y)\big)\cap\I(\tfrac{x+y}{2}).
\end{equation}
We now obtain the following estimate for $x,y\in \RR^n$.

\begin{align*}
    \|\Ca(x)-\Ca(y)\|^2 &= \sum_{i\in \I(x)\setminus \I(y)}|\langle x,\phi_i\rangle -\alpha_i|^2+\sum_{i\in \I(y)\setminus \I(x)}|\langle y,\phi_i\rangle -\alpha_i|^2\\ &\qquad+\sum_{i\in \I(x)\cap \I(y)}|\langle x-y,\phi_i\rangle|^2\\
    &\geq\sum_{\substack{i\in (\I(x)\setminus \I(y))\\\cap\; \I(\frac{x+y}{2})}}\tfrac{1}{4}\;|\langle x-y,\phi_i\rangle|^2+\sum_{\substack{i\in (\I(y)\setminus \I(x))\\\cap\; \I(\frac{x+y}{2})}}\tfrac{1}{4}\;|\langle x-y,\phi_i\rangle|^2\\ &\qquad+\sum_{i\in \I(x)\cap \I(y)}\tfrac{1}{4}\;|\langle x-y,\phi_i\rangle|^2\hspace{.75cm}\textrm{ by }\eqref{E:half},
    \\
    & =\sum_{\substack{i \in (\I(x) \cup \I(y)) \\ \cap\; \I\left(\frac{x+y}{2}\right)}}\tfrac{1}{4}\;|\langle x-y,\phi_i\rangle|^2\hspace{.5cm}\textrm{ by }\eqref{eq:relu1}
    ,\\
    & =\sum_{i\in  \I(\frac{x+y}{2})}\tfrac{1}{4}\;|\langle x-y,\phi_i\rangle|^2\hspace{.75cm}\textrm{ by }\eqref{eq:relu2}
    ,\\
    & \geq\tfrac{1}{4}\; A_\alpha({\tfrac{x+y}{2}})\|x-y\|^2.
\end{align*}

\end{proof}

As $\kappa_L\leq \sqrt{A_\alpha}$, we have the following corollary which proves Theorem 1 for the case of ReLU layers.

\begin{corollary}\label{cor:lip}
Let $(\phi_i)_{i\in I}$ be a frame for $\RR^n$ and $\alpha\in\RR^m$ a bias vector such that the associated $\relu$ layer $\Ca$ is one-to-one.
    If $\kappa_L$ is the optimal lower Lipschitz bound of $\Ca$ then
    \begin{equation}
        \tfrac{1}{2}\sqrt{A_\alpha} \leq \kappa_L \leq \sqrt{A_\alpha}.
    \end{equation}
\end{corollary}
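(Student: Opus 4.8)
The plan is to combine the pointwise estimate of Theorem~\ref{thm:relu} with the definition of $A_\alpha$ as a global minimum, and then supply a matching upper bound from the affine structure of $\Ca$ on a region of constant activation. For the lower bound, I would start from Theorem~\ref{thm:relu}, which gives $\tfrac{1}{4}A_\alpha(\tfrac{x+y}{2})\|x-y\|^2 \leq \|\Ca(x)-\Ca(y)\|^2$ for all $x,y\in\RR^n$. Since $A_\alpha=\min_{z\in\RR^n}A_\alpha(z)$, we have $A_\alpha(\tfrac{x+y}{2})\geq A_\alpha$ for every pair $x,y$, so the left-hand side may be weakened to $\tfrac{1}{4}A_\alpha\|x-y\|^2 \leq \|\Ca(x)-\Ca(y)\|^2$. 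Taking square roots shows that $\tfrac{1}{2}\sqrt{A_\alpha}$ is a valid lower Lipschitz bound for $\Ca$, and hence $\kappa_L\geq\tfrac{1}{2}\sqrt{A_\alpha}$ by optimality of $\kappa_L$.

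For the upper bound, I would invoke the observation recorded just before Theorem~\ref{thm:relu}: because there are only finitely many activation patterns $\I(x)$, the minimum defining $A_\alpha$ is attained, and there is an open set $U\subseteq\RR^n$ on which $\I(\cdot)$ is constant and equal to a minimizing pattern, so that $A_\alpha(x)=A_\alpha$ for $x\in U$. On $U$ the map $\Ca$ agrees with the affine map $x\mapsto(\langle x,\phi_i\rangle-\alpha_i)_{i\in\I(x)}$, whose linear part is the analysis operator of the subcollection $(\phi_i)_{i\in\I(x)}$. Because $U$ is open, the optimal lower Lipschitz constant of this affine map on $U$ equals the smallest singular value of that analysis operator, namely $\sqrt{A_\alpha}$. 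Since $\kappa_L$ must satisfy the lower Lipschitz inequality on all of $\RR^n$, and in particular for pairs inside $U$, no lower Lipschitz bound can exceed $\sqrt{A_\alpha}$; that is, $\kappa_L\leq\sqrt{A_\alpha}$.

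I do not expect a serious obstacle here, since Theorem~\ref{thm:relu} already carries the analytic weight. The one point that deserves care is the upper bound: one must confirm that the minimizing activation pattern is realized on a genuinely open set, so that arbitrarily close pairs $x,y$ both lie in $U$ and the affine restriction is probed down to its smallest singular value. This is precisely where the finiteness of the activation patterns and the relation $A_\alpha(x)=A_\alpha$ on $U$ are used. Granting this, the two bounds together yield $\tfrac{1}{2}\sqrt{A_\alpha}\leq\kappa_L\leq\sqrt{A_\alpha}$.
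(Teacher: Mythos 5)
Your proposal is correct and follows essentially the same route as the paper: the lower bound comes from Theorem~\ref{thm:relu} together with $A_\alpha(\tfrac{x+y}{2})\geq A_\alpha$, and the upper bound comes from the observation (recorded in the paper just before Theorem~\ref{thm:relu}) that $\Ca$ is affine on an open set realizing the minimizing activation pattern, so $\kappa_L\leq\sqrt{A_\alpha}$. No gaps.
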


When Lipschitz bounds for ReLU layers were first being studied, it was conjectured that it was always the case that $\kappa_L=\sqrt{A_\alpha}$.  However,
an example is given in \cite{puthawala2022relu} where $\kappa_L<\sqrt{A_\alpha}$.  We extend this example to give for every $n\in\NN$ a simple construction of a frame for $\RR^n$ where $\kappa_L=\tfrac{1}{\sqrt{2}}\sqrt{A_\alpha}$.

\begin{proposition}\label{prop:ex}
    Let $(\phi_i)_{i\in I}$ be a frame for $\RR^n$ with optimal lower frame bound $A$. The following holds.
    \begin{enumerate}[(i)]
        \item The ReLU layer $\Ca$ associated with $(\phi_i)_{i\in I}\cup (-\phi_i)_{i\in I}$ and $\alpha = \boldsymbol{0}$ is one-to-one.
        \item $A_\alpha = A$.
        \item The optimal lower Lipschitz bound for $\Ca$ equals $\tfrac{1}{\sqrt{2}}\sqrt{A}$.
    \end{enumerate}
\end{proposition}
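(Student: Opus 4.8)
The plan is to dispatch (i) and (ii) quickly from Theorem~\ref{thm:reluinj} together with the sign-invariance of frame operators, and then to pin the \emph{exact} constant in (iii) by combining one coordinatewise lower estimate with a single antipodal extremal example. For (i), I would index the doubled system by two copies of $I$ and observe that for any $x\in\RR^n$ the copy of $\phi_i$ is activated exactly when $\xphi\geq 0$ and the copy of $-\phi_i$ exactly when $\xphi\leq 0$. Hence for every $i$ at least one of $\pm\phi_i$ is activated, so the activated collection always contains the family $(\operatorname{sign}(\xphi)\,\phi_i)_{i\in I}$. Since $\sum_i(\pm\phi_i)(\pm\phi_i)^{\top}=\sum_i\phi_i\phi_i^{\top}$, this family shares the frame operator, hence the frame bounds, of $(\phi_i)_{i\in I}$, and Theorem~\ref{thm:reluinj} gives injectivity of $\Ca$.

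The same remark proves (ii): the activated collection always contains a sign-flipped copy of the full frame, and adjoining extra vectors can never lower the lower frame bound, so $A_\alpha(x)\geq A$ for all $x$; for any $x$ with all $\xphi\neq 0$ the activated collection is exactly a sign flip of $(\phi_i)_{i\in I}$, giving $A_\alpha(x)=A$. Therefore $A_\alpha=\min_x A_\alpha(x)=A$.

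For (iii) I would argue both inequalities. For the lower bound, fix $i$, write $a=\xphi$ and $b=\yphi$, and record the contribution $f(a,b)=(\relu(a)-\relu(b))^2+(\relu(-a)-\relu(-b))^2$ of the two copies of index $i$ to $\|\Ca(x)-\Ca(y)\|^2$. A short sign case-split gives $f(a,b)=(a-b)^2$ when $a,b$ agree in sign and $f(a,b)=a^2+b^2$ when they disagree; in either case $f(a,b)\geq\tfrac12(a-b)^2$, the disagreeing case being just $(a+b)^2\geq 0$. Summing over $i$ and applying the lower frame bound yields $\|\Ca(x)-\Ca(y)\|^2\geq\tfrac12\sum_i|\langle x-y,\phi_i\rangle|^2\geq\tfrac{A}{2}\|x-y\|^2$, so $\kappa_L\geq\tfrac1{\sqrt 2}\sqrt A$ (consistent with, and sharper than, Corollary~\ref{cor:lip} for this frame). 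For the matching upper bound, I would note that $f(a,b)\geq\tfrac12(a-b)^2$ is sharp exactly when $b=-a$, which holds at every coordinate simultaneously for the antipodal pair $(x,-x)$. Taking $x_0$ a unit eigenvector of $\sum_i\phi_i\phi_i^{\top}$ for its smallest eigenvalue $A$ gives $\|\Ca(x_0)-\Ca(-x_0)\|^2=\sum_i 2\langle x_0,\phi_i\rangle^2=2A$ while $\|x_0-(-x_0)\|^2=4$, so the squared ratio equals $\tfrac{A}{2}$ and $\kappa_L\leq\tfrac1{\sqrt 2}\sqrt A$. The two bounds together force $\kappa_L=\tfrac1{\sqrt2}\sqrt A$.

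The sign case-split is routine; the one load-bearing point is recognizing that the coordinatewise slack in $f(a,b)\geq\tfrac12(a-b)^2$ vanishes globally precisely along the antipodal pair $(x,-x)$, and that the worst such pair is the minimal eigenvector of the frame operator. This is exactly what upgrades the generic two-sided trapping $[\tfrac12\sqrt A,\sqrt A]$ of Corollary~\ref{cor:lip} into the exact value $\tfrac1{\sqrt2}\sqrt A$.
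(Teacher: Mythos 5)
Your proposal is correct and follows essentially the same route as the paper: (i) and (ii) via the observation that the activated collection always contains a sign-flipped copy of the full frame, and (iii) via the same split into coordinates where the two frame coefficients agree or disagree in sign, the same elementary inequality $a^2+b^2\geq\tfrac12(a-b)^2$ when $ab\leq 0$, and the same antipodal extremal pair $(u,-u)$ with $u$ a minimal eigenvector of the frame operator. Your per-index bookkeeping with $f(a,b)$ is just a repackaging of the paper's partition into $I^+(x,y)$ and $I^-(x,y)$.
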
    

\begin{proof}
    Point $(i)$ is true since for any $x\in \RR^n$ it holds that either $\xphi \geq 0$ or $-\xphi\geq 0$. Hence, for all $i\in I$ either $\phi_i$ or $-\phi_i$ is activated. Since $(\pm\phi_i)_{i\in I}$ is still a frame for any combination of signs, the associated ReLU layer is injective by Theorem \ref{thm:reluinj}. Moreover, since changing signs of the frame elements does not change the lower frame bound, $(ii)$ immediately follows.
    
    
    We now prove $(iii)$. Let us denote $I^+(x,y) = \{i\in I: \xphi \yphi > 0\}$ and $I^-(x,y) = \{i\in I: \xphi \yphi \leq 0\}$. Furthermore, recall the fact that for any $a,b>0$ it holds that
    \begin{equation}\label{lem:ab}
        a^2+b^2\geq \tfrac{1}{2}(a+b)^2
    \end{equation}
    with equality if $a=b$.
    Using this, for any $x,y\in \RR^n$ we deduce
\begin{align*}
    \|\Ca(x)-\Ca(y)\|^2 
    &= \sum_{i\in I^+(x,y)}  | \langle x, \phi_i\rangle -\langle y, \phi_i\rangle|^2 + \sum_{i\in I^-(x,y)} |\langle x, \phi_i\rangle|^2 + |\langle y, \phi_i\rangle|^2\\
    &\geq \sum_{i\in I^+(x,y)}  | \langle x, \phi_i\rangle -\langle y, \phi_i\rangle|^2 + \tfrac{1}{2}\sum_{i\in I^-(x,y)}  | \langle x, \phi_i\rangle -\langle y, \phi_i\rangle|^2
    \\
    &\geq \tfrac{1}{2} \sum_{i\in I}  | \langle x, \phi_i\rangle -\langle y, \phi_i\rangle|^2\\
    &\geq \tfrac{1}{2} A \Vert x-y \Vert^2.
\end{align*}
Thus, we have that $\kappa_L\geq \tfrac{1}{\sqrt{2}}\sqrt{A}$.
Now let $u\in\RR^n$ be a unit vector that satisfies $A \Vert u \Vert^2 = \sum_{i\in I} \vert \langle u, \phi_i\rangle \vert^2$. Setting $x=u$ and $y=-u$, we have that $I^+(x,y)=\emptyset$ and $| \langle x, \phi_i\rangle|^2 =|\langle y, \phi_i\rangle|^2$. With this we obtain
\begin{align*}
    \|\Ca(x)-\Ca(y)\|^2
    &= \sum_{i\in I^-(x,y)} |\langle x, \phi_i\rangle|^2 + |\langle y, \phi_i\rangle|^2\\
    &= \sum_{i\in I} |\langle x, \phi_i\rangle|^2 + |\langle y, \phi_i\rangle|^2\hspace{.5cm}\textrm{ by invariance of the signs},\\
    &= \tfrac{1}{2}\sum_{i\in I}  | \langle x, \phi_i\rangle -\langle y, \phi_i\rangle|^2 \hspace{.5cm}\textrm{ by equality in \eqref{lem:ab}},
    \\
    &= \tfrac{1}{2}\sum_{i\in I}  | \langle 2u, \phi_i\rangle|^2 \hspace{.5cm}\textrm{ by inserting }x=u, y=-u,
    \\
    &= \tfrac{1}{2}A \Vert 2u \Vert^2 \hspace{.5cm}\textrm{ by design of }u,
    \\
    &= \tfrac{1}{2} A \Vert x-y \Vert^2 \hspace{.5cm}\textrm{ as }2u = x-y.
\end{align*}
This gives that $\kappa_L\leq \tfrac{1}{\sqrt{2}}\sqrt{A}$ and completes the proof.
\end{proof}

For the special construction of an injective ReLU layer as in the above proposition, it is easy to explicitly compute the optimal lower Lipschitz bound for $\Ca$ using a vector where the activated coordinates are disjoint. In a more general case, computing an optimal lower bound is significantly more difficult. Moreover, the bound that we obtain in Proposition \ref{prop:ex} is still off by a factor of $\tfrac{1}{\sqrt{2}}$ to the bound that we derived in Corollary \ref{cor:lip}.
In the context of neural networks, knowing the best possible estimate for the optimal lower Lipschitz bound for the individual layers is crucial since any errors propagate under the composition of multiple layers. Therefore, our improved lower Lipschitz bound for single ReLU layers becomes even more important when one considers deeper neural networks. Although the numerical experiments in \cite{RauschMasters} give strong evidence that our factor of $\tfrac{1}{2}$ is indeed optimal, we are left with the following open problem.

\begin{problem}
    What is the largest $ \tfrac{1}{2}\leq K\leq\tfrac{1}{\sqrt{2}}$ so that for every frame $(\phi_i)_{i\in I}$ and a bias vector $\alpha\in\RR^m$ such that the associated ReLU layer $\Ca$ is one-to-one,  its optimal lower Lipschitz bound satisfies $\kappa_L\geq K\sqrt{A_\alpha}$?
\end{problem}

\subsection{Stability of saturated measurements}\label{sec:sat}
Since the condition for recovering a vector from saturated measurements is very closely related to the injectivity of a ReLU layer, it is not surprising that the nature of their lower Lipschitz stability is closely related. As a result,  the same technique as in the proof of Theorem \ref{thm:relu} allows us to to derive an explicit lower Lipschitz bound for the saturation recovery problem.

Note that if the saturated measurement operator $\Sl$ is one-to-one on $\mathbb{B}_{\RR^n}$ then for every $x\in \mathbb{B}_{\RR^n}$ we have that  $(\phi_i)_{i\in \Il(x)}$ is a frame of $\RR^n$.  We denote $A_\lambda(x)$ to be the optimal lower frame bound for $(\phi_i)_{i\in \Il(x)}$ and   denote
\begin{equation}
    A_\lambda=\min_{x\in \mathbb{B}_{\RR^n}} A_\lambda(x).
\end{equation}
 In \cite{alharbi2024sat}, it is proven that if $\lambda$ is greater than some critical threshold then $\Sl$ is injective on $\mathbb{B}_{\RR^n}$ if and only if it is bi-Lipschitz.  Furthermore, as is the case for ReLU layers, the lower Lipschitz bound satisfies $\kappa_L\leq A_\lambda$.  However, the proof does not provide a  lower bound on $\kappa_L$. 

For the presentation of our estimation it will be convenient to
introduce some more notation. First, when it comes to saturated coordinates, it is crucial to distinguish between positively and negatively saturated coordinates. For $x\in \mathbb{B}_{\RR^n}$, we set
\begin{align}
    \Il^+(x) &= \{i\in I: \xphi > \lambda\},\\
    \Il^-(x) &= \{i\in I: \xphi < \lambda\}.
\end{align}
When considering the situation where for two vectors $x,y\in \mathbb{B}_{\RR^n}$ the associated coordinates are both saturated, i.e., $|\langle x,\phi_i \rangle|> \lambda$ and $|\langle y,\phi_i \rangle|> \lambda$, there are two possibilities that can occur to the corresponding frame coefficients:
\begin{itemize}
    \item If they have the same sign, $i\in (\Il^+(x)\cap \Il^+(y))\cup (\Il^-(x)\cap \Il^-(y))$, then 
    $ |\sigma_\lambda(\langle x,\phi_i \rangle)-\sigma_\lambda(\langle y,\phi_i \rangle)|=0.$
    \item If they have opposite signs, $i\in (\Il^+(x)\cap \Il^-(y))\cup (\Il^-(x)\cap \Il^+(y))$, then
    $|\sigma_\lambda(\langle x,\phi_i \rangle)-\sigma_\lambda(\langle y,\phi_i \rangle)|=2\lambda.$
\end{itemize}
For this reason, the latter case needs to be treated differently as it plays a special role in the estimate. We shall define the corresponding set of coordinates
as
\begin{equation}
    \Il^{\Delta}(x,y) = \left( \Il^+(x) \cap \Il^-(y) \right) \cup \left( \Il^-(x) \cap \Il^+(y) \right).
\end{equation}
Now we can formulate our result on the lower Lipschitz bound of $\Sl$  that depends on $x,y\in \RR^n$, analogous to Theorem \ref{thm:relu}. The global statement follows in Corollary \ref{cor:lip2}.

\begin{theorem}\label{thm:sat}
Let $(\phi_i)_{i\in I}$ be a frame for $\RR^n$ and $\lambda>0$. 
    For all $x,y\in\mathbb{B}_{\RR^n}$, we have that
    \begin{equation}
        \left(\tfrac{1}{4}A_\lambda^\Delta(x,y) + \vert \Il^{\Delta}(x,y) \vert \cdot \lambda^2\right)\; \|x-y\|^2 \leq \|\Sl(x)-\Sl(y)\|^2,
    \end{equation}
    where $A_\lambda^\Delta(x,y)$ is the optimal lower frame bound of $(\phi_i)_{i\in \Il(\frac{x+y}{2}) \setminus \Il^{\Delta}(x,y)}$.
\end{theorem}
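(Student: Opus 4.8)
The plan is to follow the blueprint of the proof of Theorem \ref{thm:relu}, replacing the role of the activation set $\I(\cdot)$ by the non-saturation set $\Il(\cdot)$ and treating the opposite-sign saturated coordinates $\Il^\Delta(x,y)$ as a separate block. For fixed $x,y\in\mathbb{B}_{\RR^n}$ and each $i\in I$, I abbreviate $a_i=\langle x,\phi_i\rangle$, $b_i=\langle y,\phi_i\rangle$, and write the per-coordinate contribution as $D_i=|\sigma_\lambda(a_i)-\sigma_\lambda(b_i)|$, so that $\|\Sl(x)-\Sl(y)\|^2=\sum_{i\in I}D_i^2$. Since $\Il^\Delta(x,y)$ and $\Il(\tfrac{x+y}{2})\setminus\Il^\Delta(x,y)$ are disjoint subsets of $I$, discarding all remaining (nonnegative) terms yields
\begin{equation*}
\|\Sl(x)-\Sl(y)\|^2\ \geq\ \sum_{i\in\Il^\Delta(x,y)}D_i^2\ +\ \sum_{i\in\Il(\frac{x+y}{2})\setminus\Il^\Delta(x,y)}D_i^2,
\end{equation*}
and the two sums will be bounded below by $|\Il^\Delta(x,y)|\lambda^2\|x-y\|^2$ and $\tfrac14 A_\lambda^\Delta(x,y)\|x-y\|^2$ respectively.

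For the second sum I would prove the coordinatewise inequality $D_i\geq\tfrac12|a_i-b_i|$ for every $i\in\Il(\tfrac{x+y}{2})\setminus\Il^\Delta(x,y)$, the exact analog of \eqref{E:half}. The first observation is that membership in $\Il(\tfrac{x+y}{2})$ means $|a_i+b_i|\leq 2\lambda$, which rules out both coordinates being saturated with the same sign (that would force $|a_i+b_i|>2\lambda$); together with the removal of $\Il^\Delta(x,y)$ this leaves only two cases. If both coordinates are unsaturated then $D_i=|a_i-b_i|$ directly. If exactly one is saturated, say $a_i>\lambda$ and $|b_i|\leq\lambda$, then $\sigma_\lambda(a_i)=\lambda$, $\sigma_\lambda(b_i)=b_i$, and the desired inequality $\lambda-b_i\geq\tfrac12(a_i-b_i)$ is equivalent to $a_i+b_i\leq 2\lambda$, which is exactly the midpoint constraint; the sign-flipped subcase $a_i<-\lambda$ and the subcase where $b_i$ is the saturated coordinate follow by symmetry. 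Squaring and summing, the lower frame bound of $(\phi_i)_{i\in\Il(\frac{x+y}{2})\setminus\Il^\Delta(x,y)}$ gives $\sum_i D_i^2\geq\tfrac14 A_\lambda^\Delta(x,y)\|x-y\|^2$.

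For the first sum, each $i\in\Il^\Delta(x,y)$ has opposite-sign saturated coordinates, so $\sigma_\lambda(a_i)$ and $\sigma_\lambda(b_i)$ equal $\pm\lambda$ with opposite signs and $D_i^2=4\lambda^2$. Because $x,y\in\mathbb{B}_{\RR^n}$ we have $\|x-y\|\leq 2$, hence $4\lambda^2\geq\lambda^2\|x-y\|^2$, and summing over the $|\Il^\Delta(x,y)|$ such coordinates produces the term $|\Il^\Delta(x,y)|\lambda^2\|x-y\|^2$. Adding the two lower bounds gives the claimed inequality.

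The main obstacle is the coordinatewise estimate in the mixed case: one must verify that $D_i\geq\tfrac12|a_i-b_i|$ holds precisely because of the midpoint constraint $|a_i+b_i|\leq 2\lambda$, and simultaneously recognize that the genuinely troublesome configuration---both coordinates saturated with the same sign, where $D_i=0$ but $|a_i-b_i|$ may be large---is automatically excluded from $\Il(\tfrac{x+y}{2})$. Getting the bookkeeping right so that exactly the opposite-sign saturated coordinates are peeled off into the $\lambda^2$ term, while every coordinate kept in the sum admits the factor $\tfrac14$, is the delicate part; the use of the ambient unit-ball bound $\|x-y\|\leq 2$ to convert the constant $4\lambda^2$ into the $x,y$-dependent quantity $\lambda^2\|x-y\|^2$ is what makes the two blocks combine cleanly.
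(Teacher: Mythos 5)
Your proposal is correct and follows essentially the same route as the paper's proof: the midpoint trick yielding the coordinatewise bound $|\sigma_\lambda(\langle x,\phi_i\rangle)-\sigma_\lambda(\langle y,\phi_i\rangle)|\geq\tfrac12|\langle x-y,\phi_i\rangle|$ on $\Il(\tfrac{x+y}{2})\setminus\Il^\Delta(x,y)$, the separate $4\lambda^2$ contribution from opposite-sign saturated coordinates, and the final conversion via $\|x-y\|\leq 2$. The only difference is organizational: you run a direct case analysis over the target index set, whereas the paper decomposes by the saturation pattern of $(x,y)$ and reassembles via the set inclusions \eqref{eq:sat1} and \eqref{eq:sat2}; the substance is identical.
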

It is important to note that if there are no saturated coordinates with opposite signs, i.e., $\Il^{\Delta}(x,y)=\emptyset$ then $A_\lambda^\Delta(x,y) = A_\lambda(\tfrac{x+y}{2})>0$ is the lower bound. On the other hand, if $\Il^{\Delta}(x,y)\neq\emptyset$ then $(\phi_i)_{i\in \Il(\frac{x+y}{2}) \setminus \Il^{\Delta}(x,y)}$ might not be a frame as more frame elements are removed. In this case, it might happen that $A_\lambda^\Delta(x,y)=0$, and then the lower bound depends on the number of saturated coordinates with opposite signs, given by $\vert \Il^{\Delta}(x,y) \vert$.

\begin{proof}
    By the same argument as for the ReLU case, we have that 
    \begin{equation}\label{eq:sat1}
        (\Il(x)\cap\Il(y))\subseteq \Il(\tfrac{x+y}{2}).
    \end{equation}
    Similarly, it holds that
    \begin{equation}\label{eq:sat2}
        \left(\Il\left(\tfrac{x+y}{2}\right)\setminus \Il^\Delta(x,y) \right)\subseteq \left(\Il(x)\cup \Il(y)\right).
    \end{equation}
    
    Now assume that $i\in \big(\Il(x)\setminus \Il(y)\big)\cap\Il(\tfrac{x+y}{2})$ and observe the following for the two possible cases of saturation for $\langle y,\phi_i \rangle$.
    If $i\in \Il^+(y)$, then
    $
        2\vert \langle x, \phi_i\rangle - \lambda\vert\geq  
        \vert \langle x-y, \phi_i\rangle\vert,
    $
    and if $i\in \Il^-(y)$, then
    $
        2\vert \langle x, \phi_i\rangle +\lambda\vert\geq  
        \vert \langle x-y, \phi_i\rangle\vert.
    $
    So in general, we have for all $i\in \big(\Il(x)\setminus \Il(y)\big)\cap\Il(\tfrac{x+y}{2})$ that 
    \begin{equation}\label{E:2sat}
    2\vert \langle x, \phi_i\rangle - \operatorname{sign}( \langle y, \phi_i\rangle )\lambda\vert\geq \vert \langle x-y, \phi_i\rangle\vert .    
    \end{equation}
    
    Using this, we now can deduce the following estimate for $x,y\in \mathbb{B}_{\RR^n}$.
    \begin{align*}
    &\|\Sl(x)-\Sl(y)\|^2\\
    &=\sum_{i\in \Il(x)\setminus \Il(y)}|\langle x,\phi_i\rangle -\operatorname{sign}( \langle y, \phi_i\rangle )\lambda|^2 +\sum_{i\in \Il(y)\setminus \Il(x)}|\langle y,\phi_i\rangle -\operatorname{sign}( \langle x, \phi_i\rangle )\lambda|^2\\
    &\qquad +\sum_{i\in \Il(x)\cap \Il(y)}|\langle x-y,\phi_i\rangle|^2 +\sum_{i\in \Il^\Delta(x,y)}4\lambda^2\\
    &\geq\sum_{\substack{i\in (\Il(x)\setminus \Il(y))\\\cap\; \Il(\frac{x+y}{2})}}\tfrac{1}{4}\; |\langle x-y,\phi_i\rangle|^2+\sum_{\substack{i\in (\Il(y)\setminus \Il(x))\\\cap\; \Il(\frac{x+y}{2})}}\tfrac{1}{4}\; |\langle x-y,\phi_i\rangle|^2\\
    & \qquad + \sum_{i\in \Il(x)\cap \Il(y)}\tfrac{1}{4}\;|\langle x-y,\phi_i\rangle|^2 + \vert \Il^{\Delta}(x,y) \vert\cdot 4\lambda^2\hspace{.5cm}\textrm{ by }\eqref{E:2sat},\\
    & =
    \sum_{\substack{i \in (\Il(x) \cup \Il(y)) \\ \cap\; \Il\left(\frac{x+y}{2}\right)}}\tfrac{1}{4}\; |\langle x-y,\phi_i\rangle|^2 + \vert \Il^{\Delta}(x,y) \vert\cdot 4\lambda^2 \hspace{.5cm}\textrm{ by }\eqref{eq:sat1}
    ,\\
    & =\sum_{i\in  \Il(\frac{x+y}{2})\setminus \Il^{\Delta}(x,y) }\tfrac{1}{4}\; |\langle x-y,\phi_i\rangle|^2 + \vert \Il^{\Delta}(x,y) \vert\cdot 4\lambda^2 \hspace{.5cm}\textrm{ by }\eqref{eq:sat2}
    ,\\
    & \geq\tfrac{1}{4}A_\lambda^\Delta(x,y)\|x-y\|^2 + \vert \Il^{\Delta}(x,y) \vert\cdot 4\lambda^2.
    \\   
    & \geq\left(\tfrac{1}{4}A_\lambda^\Delta(x,y) + \vert \Il^{\Delta}(x,y) \vert\cdot \lambda^2\right) \|x-y\|^2\hspace{.5cm}\textrm{ as }2\geq\|x-y\|.
\end{align*}
\end{proof}
The following corollary gives Theorem 1 for the case of $\lambda$-saturation.
\begin{corollary}\label{cor:lip2}
Let $(\phi_i)_{i\in I}$ be a frame for $\RR^n$ and $\lambda>0$ such that  $\Sl$ is one-to-one on $\mathbb{B}_{\RR^n}$.    If $\kappa_L$ is the optimal lower Lipschitz bound of $\Sl$ then
    \begin{equation}
        \min\left\{\tfrac{1}{2}\sqrt{A_\lambda}, \lambda\right\} \leq \kappa_L \leq \sqrt{A_\lambda}.
    \end{equation}
\end{corollary}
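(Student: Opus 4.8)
The plan is to read the corollary directly off the pointwise estimate of Theorem \ref{thm:sat}, the only real work being to bound the $x,y$-dependent coefficient appearing there uniformly from below. The upper bound $\kappa_L \le \sqrt{A_\lambda}$ I would obtain exactly as in the ReLU case: choose a minimizer $x^\ast \in \mathbb{B}_{\RR^n}$ with $A_\lambda(x^\ast)=A_\lambda$ that lies in the interior of the ball and off every saturation hyperplane $\langle \cdot,\phi_i\rangle = \pm\lambda$, so that on a small open neighborhood $U$ the pattern $\Il(\cdot)$ is constant and $\Sl$ restricts to an affine-linear map whose linear part is the analysis operator of $(\phi_i)_{i\in\Il(x^\ast)}$. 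On $U$ the best constant in $\kappa^2\|x-y\|^2 \le \|\Sl(x)-\Sl(y)\|^2$ is exactly $A_\lambda(x^\ast)=A_\lambda$, so no global lower Lipschitz bound can exceed $\sqrt{A_\lambda}$.

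For the lower bound, I would show that for \emph{all} $x,y\in\mathbb{B}_{\RR^n}$,
\[
    \tfrac{1}{4}A_\lambda^\Delta(x,y) + \vert \Il^{\Delta}(x,y)\vert\cdot\lambda^2 \;\ge\; \min\left\{\tfrac14 A_\lambda,\ \lambda^2\right\}.
\]
This splits into two cases. If $\Il^\Delta(x,y)=\emptyset$, then by the remark following Theorem \ref{thm:sat} we have $A_\lambda^\Delta(x,y)=A_\lambda(\tfrac{x+y}{2})$; since $\mathbb{B}_{\RR^n}$ is convex the midpoint $\tfrac{x+y}{2}$ again lies in the ball, so $A_\lambda(\tfrac{x+y}{2})\ge A_\lambda$ by definition of the minimum, and the left-hand side is at least $\tfrac14 A_\lambda$. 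If instead $\Il^\Delta(x,y)\ne\emptyset$, then $\vert\Il^\Delta(x,y)\vert\ge 1$, and since a lower frame bound is always nonnegative I may simply drop the first term to conclude that the left-hand side is at least $\lambda^2$. In either case the displayed inequality holds.

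Combining this with Theorem \ref{thm:sat} gives $\|\Sl(x)-\Sl(y)\|^2 \ge \min\{\tfrac14 A_\lambda,\lambda^2\}\,\|x-y\|^2$ for all $x,y\in\mathbb{B}_{\RR^n}$; taking square roots and using $\sqrt{\min\{\tfrac14 A_\lambda,\lambda^2\}}=\min\{\tfrac12\sqrt{A_\lambda},\lambda\}$ shows that $\min\{\tfrac12\sqrt{A_\lambda},\lambda\}$ is a valid lower Lipschitz bound, whence $\kappa_L\ge\min\{\tfrac12\sqrt{A_\lambda},\lambda\}$ by optimality, which together with the upper bound is the claim. Since Theorem \ref{thm:sat} does all the heavy lifting, the corollary is essentially bookkeeping; the only genuinely delicate point is the upper-bound direction, where I must guarantee that the minimizing saturation pattern is realized on an honest \emph{open} subset of $\mathbb{B}_{\RR^n}$ — that is, that the minimizer can be taken in the interior of the ball and away from the saturation hyperplanes — so that the affine-linearity argument actually applies.
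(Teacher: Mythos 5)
Your proof is correct and, for the lower bound, is exactly the paper's argument: the same two-case split on whether $\Il^\Delta(x,y)$ is empty, with $A_\lambda^\Delta(x,y)=A_\lambda(\tfrac{x+y}{2})\geq A_\lambda$ in the first case and the term $\vert\Il^\Delta(x,y)\vert\cdot\lambda^2\geq\lambda^2$ carrying the second. The only divergence is the upper bound, which the paper simply cites from \cite{alharbi2024sat}, whereas you re-derive it via the local-affine argument the paper uses in the ReLU case; the delicate point you flag does go through, since the points of the open unit ball lying off all hyperplanes $\vert\langle x,\phi_i\rangle\vert=\lambda$ form an open dense set on whose connected components $\Il(\cdot)$ is constant, and for $x'$ in that set sufficiently close to any $x\in\mathbb{B}_{\RR^n}$ one has $\Il(x')\subseteq\Il(x)$ and hence $A_\lambda(x')\leq A_\lambda(x)$, so the minimizing saturation pattern is indeed realized on an open set.
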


\begin{proof}
    Recall that the upper bound $\kappa_L\leq \sqrt{A_\lambda}$ was proven in \cite{alharbi2024sat}.
    The lower bound follows from Theorem \ref{thm:sat} by the following argument. Let $x,y\in\mathbb{B}_{\RR^n}$.  We first assume that 
    $|\Il^{\Delta}(x,y)|=0$ and hence $A_\lambda^\Delta(x,y) = A_\lambda(\tfrac{x+y}{2})\geq A_\lambda$. Theorem \ref{thm:sat} gives that
    $$\tfrac{1}{2}\sqrt{A_\lambda}\|x-y\|\leq \|\Sl(x)-\Sl(y)\|.$$

    We now assume that $|\Il^{\Delta}(x,y)|\geq 1$.  In this case $(\phi_i)_{i\in \Il(\frac{x+y}{2}) \setminus \Il^{\Delta}(x,y)}$ might not form a frame.  The worst case is that $A_\lambda^\Delta(x,y)=0$ and $|\Il^{\Delta}(x,y)|= 1$. Theorem \ref{thm:sat} then gives that
    $$\lambda\|x-y\|\leq \|\Sl(x)-\Sl(y)\|.$$

    Thus, we have that $\min\left\{\tfrac{1}{2}\sqrt{A_\lambda}, \lambda\right\} \leq \kappa_L$.
\end{proof}


In the case of using the ReLU activation function, we obtained an optimal lower bound (up to a factor of at most $\tfrac{1}{\sqrt{2}}$) for the lower Lipschitz bound of ReLU layer.  For the case of $\lambda$-saturation, we have a lower bound for the lower Lipschitz bound of the $\lambda$-saturated measurement operator of $\min\{\tfrac{1}{2}\sqrt{A_\lambda},\lambda\}$.  The lower bound necessarily depends on both $A_\lambda$ and $\lambda$.
Note that $\tfrac{1}{2}\sqrt{A_\lambda}$ is optimal up to a constant factor, but it is not clear what the dependence on $\lambda$ should be. We are left with the following open problem.


\begin{problem}
    What is the function $f(A_\lambda,\lambda)$ so that for every frame $(\phi_i)_{i\in I}$ and saturation level $\lambda > 0$ such that the associated saturated measurement operator $\Sl$ is one-to-one on $\mathbb{B}_{\RR^n}$,  its optimal lower Lipschitz bound satisfies $\kappa_L\geq f(A_\lambda,\lambda)$?
\end{problem}

Notably, for frames in $\RR^n$ with $n+1$ elements, we can remove $\lambda$ from the minimum such that the lower Lipschitz bound becomes simply $\tfrac{1}{2}\sqrt{A_\lambda}$.
\begin{proposition}
    Let $(\phi_i)_{i=1}^{n+1}$ be a frame for $\RR^n$ and $\lambda>0$ such that $\Sl$ is one-to-one on $\mathbb{B}_{\RR^n}$. The optimal lower Lipschitz bound for $S_{\lambda}$ satisfies
    \begin{equation}\label{eq:satspecial}
        \tfrac{1}{2}\sqrt{A_\lambda} \leq \kappa_L \leq \sqrt{A_\lambda}.
    \end{equation}
    If additionally, $(\phi_i)_{i=1}^{n+1}$ is a finite unit norm tight frame for $\RR^n$ and $1>\lambda>0$ then the optimal lower Lipschitz bound for $S_{\lambda}$ satisfies
    \begin{equation}\label{eq:satspecialetf}
        \tfrac{1}{2\sqrt{n}} \leq \kappa_L \leq \tfrac{1}{\sqrt{n}}.
    \end{equation}
\end{proposition}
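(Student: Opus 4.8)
The plan is to deduce both displays from Corollary \ref{cor:lip2} and Theorem \ref{thm:sat} by exploiting the cardinality constraint $m=n+1$, and then to reduce the second statement to an explicit computation of $A_\lambda$. The key structural fact I would establish first is that, since $\Sl$ is one-to-one, Theorem \ref{thm:satinj} forces every activated set $\Il(x)$ to be a frame and hence to contain at least $n$ indices; because $|I|=n+1$, this means that \emph{at most one} coordinate can be saturated for any single $x\in\mathbb{B}_{\RR^n}$. As $\Il^{\Delta}(x,y)$ consists of coordinates that are saturated for both $x$ and $y$, it follows that $|\Il^{\Delta}(x,y)|\leq 1$ for all $x,y$, which is exactly the degenerate case that cost a factor in Corollary \ref{cor:lip2}.

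The hard part is to show that even in the case $|\Il^{\Delta}(x,y)|=1$ no frame elements are genuinely lost. Writing $\Il^{\Delta}(x,y)=\{i_0\}$, the index $i_0$ must be the unique saturated coordinate for both $x$ and $y$, so $\Il(x)=\Il(y)=I\setminus\{i_0\}$. I would then observe that for every $j\neq i_0$ we have $|\langle x,\phi_j\rangle|\leq\lambda$ and $|\langle y,\phi_j\rangle|\leq\lambda$, whence $|\langle \tfrac{x+y}{2},\phi_j\rangle|\leq\lambda$; this forces $\Il(\tfrac{x+y}{2})\setminus\Il^{\Delta}(x,y)=I\setminus\{i_0\}$. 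Consequently $A_\lambda^{\Delta}(x,y)$ equals the lower frame bound of $(\phi_i)_{i\neq i_0}=(\phi_i)_{i\in\Il(x)}$, which is $\geq A_\lambda$. Feeding this into Theorem \ref{thm:sat} gives $\tfrac14 A_\lambda\|x-y\|^2\leq\tfrac14 A_\lambda^{\Delta}(x,y)\|x-y\|^2+|\Il^{\Delta}(x,y)|\lambda^2\|x-y\|^2\leq\|\Sl(x)-\Sl(y)\|^2$; the same inequality holds trivially in the case $|\Il^{\Delta}(x,y)|=0$ via $A_\lambda^{\Delta}(x,y)=A_\lambda(\tfrac{x+y}{2})\geq A_\lambda$. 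Thus $\kappa_L\geq\tfrac12\sqrt{A_\lambda}$, and combined with the upper bound $\kappa_L\leq\sqrt{A_\lambda}$ from \cite{alharbi2024sat} this yields \eqref{eq:satspecial}.

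Finally, for \eqref{eq:satspecialetf} I would compute $A_\lambda$ explicitly. An $(n+1)$-element finite unit norm tight frame is, up to orthogonal equivalence, the regular simplex, with $\|\phi_i\|=1$, $\langle\phi_i,\phi_j\rangle=-\tfrac1n$ for $i\neq j$, and frame operator $\tfrac{n+1}{n}I$. Removing one element $\phi_{i_0}$ leaves the frame operator $\tfrac{n+1}{n}I-\phi_{i_0}\phi_{i_0}^{\top}$, whose smallest eigenvalue is $\tfrac{n+1}{n}-1=\tfrac1n$; hence every $n$-element subcollection has lower frame bound $\tfrac1n$. Since at most one coordinate saturates, $A_\lambda(x)$ takes only the values $\tfrac{n+1}{n}$ (no saturation) or $\tfrac1n$ (one saturation). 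Taking $x=\phi_{i_0}\in\mathbb{B}_{\RR^n}$ saturates coordinate $i_0$ because $\langle\phi_{i_0},\phi_{i_0}\rangle=1>\lambda$, while injectivity forces $\tfrac1n=|\langle\phi_{i_0},\phi_j\rangle|\leq\lambda$ for $j\neq i_0$, so $i_0$ is the unique saturated coordinate and $A_\lambda(\phi_{i_0})=\tfrac1n$. Therefore $A_\lambda=\tfrac1n$, and substituting into \eqref{eq:satspecial} gives $\tfrac{1}{2\sqrt{n}}\leq\kappa_L\leq\tfrac{1}{\sqrt{n}}$, which is \eqref{eq:satspecialetf}. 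The main obstacle throughout is the single case $|\Il^{\Delta}(x,y)|=1$; everything else is the cardinality bookkeeping above and a short eigenvalue computation.
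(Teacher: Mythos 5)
Your proof is correct and follows essentially the same route as the paper: the cardinality argument showing that injectivity forces at most one saturated coordinate per point, the case split on $|\Il^{\Delta}(x,y)|$, and the eigenvalue computation giving $A_\lambda=\tfrac{1}{n}$ for the $(n+1)$-element unit norm tight frame all match. The only immaterial difference is that in the case $|\Il^{\Delta}(x,y)|=1$ you feed the observation $A_\lambda^{\Delta}(x,y)\geq A_\lambda$ back into Theorem \ref{thm:sat}, whereas the paper recomputes $\|\Sl(x)-\Sl(y)\|^2$ directly over $\Il(x)=\Il(y)$ and gets the slightly stronger constant $A_\lambda$ in that branch; both yield the stated bound $\tfrac{1}{2}\sqrt{A_\lambda}\leq\kappa_L$.
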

\begin{proof}
    First, recall that by Theorem \ref{thm:sat}, injectivity of $S_{\lambda}$ implies that for any $x\in \mathbb{B}_{\RR^n}$ the non-saturated coordinates are a frame and therefore $|\Il(x)|\geq n$. Since there are only $n+1$ frame elements, only up to one can be saturated. Let $x,y\in \RR^n$, we distinguish two cases. If $\I^\Delta(x,y)=\emptyset$ then by Theorem \ref{thm:sat}, we have that
    $$   \tfrac{1}{4}A_\lambda \|x-y\|^2\leq\|\Sl(x)-\Sl(y)\|^2.
    $$
  We now assume that $\I^\Delta(x,y)\neq\emptyset$. As $S_{\lambda}$ is injective and the frame has $n+1$ elements, the same single coordinate is saturated for both $x$ and $y$.  That is, $|\I^\Delta(x,y)|=1$ and $\Il(x)=\Il(y)=I\setminus \I^\Delta(x,y)$. We deduce
    \begin{align*}
        \|\Sl(x)-\Sl(y)\|^2
        =\sum_{i\in \I(x)}|\langle x-y,\phi_i\rangle|^2 + 4\lambda^2 \geq A_{\lambda} \Vert x-y \Vert^2.
    \end{align*}
Thus, in both cases we have the lower bound in \eqref{eq:satspecial}.

    To prove \eqref{eq:satspecialetf}, note that in the case that $(\phi_i)_{i=1}^{n+1}$ is a frame of unit vectors and $1>\lambda>0$, we have that $A_{\lambda}$ coincides with the smallest optimal lower frame bound of an $n$-element subset of $(\phi_i)_{i=1}^{n+1}$.  Now recall that the frame bound of a unit norm tight frame in $\RR^n$ with $n+1$ elements is given by $\tfrac{n+1}{n}$. Since any vector of a  unit norm tight frame is a tight frame for its own span with bound $1$ it follows that the optimal lower bound after removing an element becomes $\tfrac{n+1}{n}-1=\tfrac{1}{n}$. We deduce that $A_{\lambda}=\tfrac{1}{n}$, which together with the first statement finishes the proof.
\end{proof}
Let us now turn to the open problem about the critical saturation level.
First, it is known that for any finite frame $(\phi_i)_{i\in I}$ for $\RR^n$ there is a critical value $\lambda_c>0$ so that $\Sl$ is one-to-one on the unit ball of $\RR^n$ if and only if $\lambda\geq \lambda_c$ \cite{alharbi2024sat}.  Furthermore, if $\lambda>\lambda_c$ then $\Sl$ is bi-Lipschitz stable.   Whether  saturation recovery is stable also at the critical saturation level was posed as an open problem in \cite{alharbi2024sat}.
A positive answer immediately follows from Theorem \ref{thm:sat} as the derived lower Lipschitz bound depends only on the fact that the associated saturated measurement operator $S_{\lambda_c}$ is one-to-one on the unit ball.
\begin{corollary}\label{cor:critsat}
    Saturation recovery is stable at the critical saturation level. In particular, if $(\phi_i)_{i\in I}$ is a finite frame of $\RR^n$ and $\lambda>0$ then $\Sl$ is injective on $\mathbb{B}_{\RR^n}$ if and only if $\Sl$ is bi-Lipschitz.
\end{corollary}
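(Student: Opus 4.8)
The plan is to prove the stated equivalence between injectivity and the bi-Lipschitz property for arbitrary $\lambda>0$; stability at the critical level then follows immediately, since $S_{\lambda_c}$ is one-to-one on $\mathbb{B}_{\RR^n}$ by the definition of $\lambda_c$ in \cite{alharbi2024sat}, and applying the equivalence at $\lambda=\lambda_c$ shows that $S_{\lambda_c}$ is bi-Lipschitz.

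The forward implication is trivial: if $\Sl$ is bi-Lipschitz then its lower Lipschitz bound $\kappa_L$ is positive, so $\Sl(x)=\Sl(y)$ forces $\|x-y\|=0$ and hence injectivity.

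For the converse I would assume $\Sl$ is one-to-one on $\mathbb{B}_{\RR^n}$ and produce both Lipschitz bounds. The upper bound is handled by non-expansivity of $\sigma_\lambda$: from $|\sigma_\lambda(s)-\sigma_\lambda(t)|\le|s-t|$ together with the upper frame bound $B$ one gets $\|\Sl(x)-\Sl(y)\|^2\le B\|x-y\|^2$. For the lower bound I would invoke Theorem \ref{thm:satinj}, which says injectivity is equivalent to $(\phi_i)_{i\in\Il(x)}$ being a frame for every $x\in\mathbb{B}_{\RR^n}$; thus each $A_\lambda(x)>0$, and since $\Il(x)$ takes only finitely many values the minimum $A_\lambda=\min_x A_\lambda(x)$ is attained and strictly positive. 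Corollary \ref{cor:lip2} then yields $\kappa_L\ge\min\{\tfrac{1}{2}\sqrt{A_\lambda},\lambda\}>0$, completing the bi-Lipschitz conclusion.

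I expect there to be no real obstacle in the argument, and that is precisely what makes it resolve the open problem of \cite{alharbi2024sat}. The subtle point worth emphasizing is that the lower bound delivered by Theorem \ref{thm:sat} is a consequence \emph{solely} of injectivity --- through the strict positivity of $A_\lambda$ --- and does not rely on any strict inequality $\lambda>\lambda_c$. Earlier arguments established bi-Lipschitz stability only in the regime $\lambda>\lambda_c$, so the behaviour exactly at $\lambda_c$ remained undecided; here the same estimate applies verbatim at the critical threshold, which is the content to spell out.
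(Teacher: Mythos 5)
Your proposal is correct and follows essentially the same route as the paper: the authors also deduce the corollary immediately from Theorem \ref{thm:sat} and Corollary \ref{cor:lip2}, noting that the lower Lipschitz bound $\min\{\tfrac{1}{2}\sqrt{A_\lambda},\lambda\}$ depends only on injectivity (which forces $A_\lambda>0$ via Theorem \ref{thm:satinj}), so it applies verbatim at $\lambda=\lambda_c$. Your additional details (the trivial forward implication, the upper bound from non-expansivity, and the finitely-many-subsets argument for attainment of the minimum) are all consistent with what the paper leaves implicit.
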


We note that for the gated measurement operator, the proof technique to derive a lower Lipschitz bound is not directly applicable since $\RR^n\setminus \mathbb{B}_{\RR^n}$ as the domain is not convex. By a modification, however, one might find a similar result. We leave this as an open problem for the future.

\subsection{Stability of phase retrieval - revisited}
Phase retrieval was the first of the three non-linear activation function problems to be considered in the context of frame theory~\cite{balan2006phase}. 
Results on phase retrieval therefore have served as an important inspiration for approaching the injectivity and stability of ReLU layers and saturation recovery. Now, by considering Theorem \ref{thm:phaseinj} $(ii^*)$ we can turn around the direction of inspiration and revisit the stability of phase retrieval to slightly improve the known results from~\cite{bandeira2014savingphase}.\\


To formulate their result on the optimal lower Lipschitz bound in phase retrieval, the authors of~\cite{bandeira2014savingphase} make use of the concept of the $\sigma$\textit{-strong complement property}. A frame $(\phi_i)_{i\in I}$ fulfills the  $\sigma$-strong complement property if for any $J\subseteq I$ the larger of the lower frame bounds of $(\phi_i)_{i\in J}$ and $(\phi_i)_{i\in J^c}$ is at least $\sigma^2$. As with characterizing injectivity, it turns out that it is sufficient to only consider the collections $(\phi_i)_{i\in I^+(x,y)}$ and $(\phi_i)_{i\in I^-(x,y)}$ for any $x,y\in \RR^n$ to obtain the result. Denoting by $A^+_{|\;\cdot\;|}(x,y)$ the optimal lower frame bound of $(\phi_i)_{i\in I^+(x,y)}$ and by $A^-_{|\;\cdot\;|}(x,y)$ the optimal lower frame bound of $(\phi_i)_{i\in I^-(x,y)}$ and defining
\begin{equation}\label{eq:Apr}
    A_{|\;\cdot\;|} = \min_{x,y\in \RR^n}\max\{A^+_{|\;\cdot\;|}(x,y),A^-_{|\;\cdot\;|}(x,y)\},
\end{equation}
the following was proven.
\begin{theorem}[Bandeira et al., 2014]\label{thm:Blip}
    Let $\kappa_L$ be the optimal lower Lipschitz bound of $\M$, and let $\sigma$ be the largest value such that the $\sigma$-strong complement property holds then
    \begin{equation}\label{E:sigma}
        \sigma \leq \kappa_L \leq \sqrt{2}\sigma.
    \end{equation}
    Moreover, 
    \begin{equation}\label{E:A}
        \sqrt{A_{|\;\cdot\;|}} \leq \kappa_L \leq \sqrt{2A_{|\;\cdot\;|}}.
    \end{equation}
\end{theorem}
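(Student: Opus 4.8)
The plan is to follow the original strategy of \cite{bandeira2014savingphase}, which is structurally different from the midpoint argument used for Theorems \ref{thm:relu} and \ref{thm:sat}. The starting point is the natural metric $d([x]_\sim,[y]_\sim)=\min\{\|x-y\|,\|x+y\|\}$ on $\RR^n/_\sim$, together with a pointwise identity that drives everything. For each $i$, if $\xphi$ and $\yphi$ have the same sign then $\big||\xphi|-|\yphi|\big|=|\langle x-y,\phi_i\rangle|$, while if they have opposite signs then $\big||\xphi|-|\yphi|\big|=|\langle x+y,\phi_i\rangle|$. Since $\langle x+y,\phi_i\rangle^2-\langle x-y,\phi_i\rangle^2=4\,\xphi\,\yphi$, same sign is equivalent to $\langle x+y,\phi_i\rangle^2\geq\langle x-y,\phi_i\rangle^2$ and opposite sign to the reverse inequality. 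Summing over $i$ therefore yields the clean formula
\begin{equation}
    \|\M[x]_\sim-\M[y]_\sim\|^2=\sum_{i\in I}\min\big\{\langle x-y,\phi_i\rangle^2,\langle x+y,\phi_i\rangle^2\big\},
\end{equation}
where the term for $i\in I^+(x,y)$ is $|\langle x-y,\phi_i\rangle|^2$ and the term for $i\in I^-(x,y)$ is $|\langle x+y,\phi_i\rangle|^2$. This reduces both pairs of bounds to estimating a sum of minima against $\min\{\|x-y\|^2,\|x+y\|^2\}$.

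For the lower bounds, I would fix $x,y$ and assume without loss of generality that $\|x-y\|\leq\|x+y\|$ (replacing $y$ by $-y$ swaps the two norms and changes neither $\M$ nor $d$), so that $d([x]_\sim,[y]_\sim)=\|x-y\|$. The pair $\{I^+(x,y),I^-(x,y)\}$ is a partition realized by actual vectors, hence $\max\{A^+_{|\;\cdot\;|}(x,y),A^-_{|\;\cdot\;|}(x,y)\}\geq A_{|\;\cdot\;|}$. I then split into two cases: if the maximum is attained on $I^+(x,y)$, I bound $\|\M[x]_\sim-\M[y]_\sim\|^2\geq A^+_{|\;\cdot\;|}(x,y)\|x-y\|^2\geq A_{|\;\cdot\;|}\|x-y\|^2$; if it is attained on $I^-(x,y)$, I bound it by $A^-_{|\;\cdot\;|}(x,y)\|x+y\|^2\geq A_{|\;\cdot\;|}\|x+y\|^2\geq A_{|\;\cdot\;|}\|x-y\|^2$, where the final step is exactly where the normalization is used. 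Either way $\kappa_L\geq\sqrt{A_{|\;\cdot\;|}}$ follows, and running the same argument with $\sigma^2$ in place of $A_{|\;\cdot\;|}$ (now invoking that \emph{every} partition satisfies the strong complement bound) gives $\kappa_L\geq\sigma$.

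For the upper bounds, it suffices to exhibit a single good pair. Let $\{S,S^c\}$ minimize $\max\{A_S,A_{S^c}\}$, choose unit vectors $p,q$ realizing the optimal lower frame bounds of $(\phi_i)_{i\in S}$ and $(\phi_i)_{i\in S^c}$ respectively, and set $x=\tfrac{1}{2}(p+q)$ and $y=\tfrac{1}{2}(q-p)$, so that $x-y=p$ and $x+y=q$. Applying the identity and using $\min\{a,b\}\leq a$ on $S$ and $\min\{a,b\}\leq b$ on $S^c$ gives $\|\M[x]_\sim-\M[y]_\sim\|^2\leq A_S\|p\|^2+A_{S^c}\|q\|^2=A_S+A_{S^c}\leq 2\max\{A_S,A_{S^c}\}$, while $d([x]_\sim,[y]_\sim)^2=\min\{\|p\|^2,\|q\|^2\}=1$. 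Letting $\{S,S^c\}$ range over all partitions yields $\kappa_L\leq\sqrt{2}\,\sigma$, and taking the realizable partition attaining $A_{|\;\cdot\;|}$ yields $\kappa_L\leq\sqrt{2A_{|\;\cdot\;|}}$.

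The main obstacle is bookkeeping the distinction between arbitrary partitions and the partitions $\{I^+(x,y),I^-(x,y)\}$ actually realized by pairs of vectors: the improvement over \cite{bandeira2014savingphase} recorded in \eqref{E:A} comes precisely from the lower bound being allowed to use only realizable partitions (so that $A_{|\;\cdot\;|}\geq\sigma^2$), while the upper bound construction is free to use any partition. One should also check the degenerate endpoints—when a subcollection fails to be a frame its optimal lower frame bound is $0$ and the corresponding minimal direction lies in the common kernel, so the estimates above remain valid—and confirm via Theorem \ref{thm:phaseinj} that injectivity of $\M$ forces $A_{|\;\cdot\;|}>0$, so that the lower bound is nontrivial.
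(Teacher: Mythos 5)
The paper does not prove this theorem itself—it cites it from Bandeira et al.\ and remarks only that \eqref{E:A} follows from the proof of their Theorem 18—so the relevant comparison is with the cited source's strategy, which is exactly what you follow. Your reconstruction is correct: the identity $\|\M[x]_\sim-\M[y]_\sim\|^2=\sum_{i\in I}\min\{\langle x-y,\phi_i\rangle^2,\langle x+y,\phi_i\rangle^2\}$, the normalization $\|x-y\|\leq\|x+y\|$ combined with the case split on which of $I^+(x,y)$, $I^-(x,y)$ attains the larger lower frame bound, and the test pair $x=\tfrac{1}{2}(p+q)$, $y=\tfrac{1}{2}(q-p)$ built from minimizing unit vectors for the upper bounds are precisely the needed ingredients, and your bookkeeping of arbitrary versus realizable partitions correctly delivers both \eqref{E:sigma} and \eqref{E:A}.
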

Although not explicitly stated in the paper, the moreover part of Theorem \ref{thm:Blip} follows directly from the proof of Theorem 18 in~\cite{bandeira2014savingphase}.
Now, since $\sigma^2$ is obtained by taking a minimum over all partitions of $(\phi_i)_{i\in I}$ into two sets and $A_{|\;\cdot\;|}$ is obtained by taking a minimum only over partitions of a particular form, it must hold that $\sigma^2\leq A_{|\;\cdot\;|}$. Furthermore, it follows from Theorem \ref{thm:Blip} that $\sigma^2\leq A_{|\;\cdot\;|}\leq 2\sigma^2$. Indeed, there are cases where $\sigma^2= A_{|\;\cdot\;|}$ and cases where $A_{|\;\cdot\;|}= 2\sigma^2$.  One can check that if  $(\phi_i)_{i\in I}$ is the Mercedes-Benz frame in $\RR^2$ then $A_{|\;\cdot\;|}=\sigma^2$ and if $(\phi_i)_{i\in I}$ consists of two copies of a single non-zero vector in $\RR$ then $A_{|\;\cdot\;|}= 2\sigma^2$.  Thus, the following gives a slight improvement of the original statement in~\cite{bandeira2014savingphase}.
\begin{corollary}
    The optimal lower Lipschitz bound of $\M$ satisfies
    \begin{equation}
        \sqrt{A_{|\;\cdot\;|}} \leq \kappa_L \leq \sqrt{2}\sigma.
    \end{equation}
\end{corollary}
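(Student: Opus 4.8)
The plan is to assemble the corollary directly from the two double inequalities in Theorem \ref{thm:Blip}, selecting the sharper endpoint from each. For the lower bound, I would simply invoke the left-hand inequality in \eqref{E:A}, which already states $\sqrt{A_{|\;\cdot\;|}} \leq \kappa_L$. For the upper bound, I would invoke the right-hand inequality in \eqref{E:sigma}, which gives $\kappa_L \leq \sqrt{2}\sigma$. Concatenating these yields the claimed chain $\sqrt{A_{|\;\cdot\;|}} \leq \kappa_L \leq \sqrt{2}\sigma$, so there is no new computation to perform beyond quoting Theorem \ref{thm:Blip}.

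The real content is verifying that this is a genuine sharpening rather than a vacuous restatement, and here I would lean on the relation $\sigma^2 \leq A_{|\;\cdot\;|} \leq 2\sigma^2$ established in the paragraph preceding the corollary. The left inequality $\sigma^2 \leq A_{|\;\cdot\;|}$ comes from the fact that $\sigma^2$ minimizes the larger lower frame bound over all partitions $\{J,J^c\}$, whereas $A_{|\;\cdot\;|}$ minimizes over the strictly smaller family of sign-induced partitions $\{I^+(x,y),I^-(x,y)\}$; a minimum taken over fewer sets can only be larger. Taking square roots gives $\sigma \leq \sqrt{A_{|\;\cdot\;|}}$, so the new lower bound dominates the one in \eqref{E:sigma}; multiplying $\sigma^2 \leq A_{|\;\cdot\;|}$ by $2$ and taking roots gives $\sqrt{2}\sigma \leq \sqrt{2 A_{|\;\cdot\;|}}$, so the new upper bound improves on the one in \eqref{E:A}. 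Hence the interval $[\sqrt{A_{|\;\cdot\;|}},\sqrt{2}\sigma]$ is contained in both of the original intervals.

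There is essentially no obstacle in the deduction itself; the only subtle point is the justification that restricting the class of partitions from all $\{J,J^c\}$ to those of the form $\{I^+(x,y),I^-(x,y)\}$ still captures the extremal behavior relevant to stability, which is precisely the equivalence $(ii)\Leftrightarrow(ii^*)$ recorded in Theorem \ref{thm:phaseinj} and used implicitly in the definition \eqref{eq:Apr}. Finally, to confirm that neither endpoint can be uniformly tightened, I would point to the two extremal examples cited in the text: the Mercedes--Benz frame in $\RR^2$, for which $A_{|\;\cdot\;|} = \sigma^2$ and the lower endpoint coincides with that of \eqref{E:sigma}, and two copies of a single nonzero vector in $\RR$, for which $A_{|\;\cdot\;|} = 2\sigma^2$ and the upper endpoint coincides with that of \eqref{E:A}. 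These witness that the combined bound genuinely improves on each of the original pairs in the appropriate regimes.
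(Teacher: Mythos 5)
Your proposal is correct and matches the paper's (implicit) argument exactly: the corollary is obtained by quoting the lower bound from \eqref{E:A} and the upper bound from \eqref{E:sigma} in Theorem \ref{thm:Blip}, with the surrounding discussion of $\sigma^2\leq A_{|\;\cdot\;|}\leq 2\sigma^2$ and the two extremal examples serving only to show the combined bound is a genuine (and not further improvable by this method) sharpening. Nothing is missing.
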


\section*{Funding}
The research of D. Freeman was supported by NSF grant 2154931. The research of D. Haider was supported by the Marietta-Blau Fellowship of the OeAD (MMC-2023-06983).



\bibliographystyle{abbrv}
\bibliography{references}

\end{document}